\DeclareMathOperator*{\argmax}{argmax}
\newtheorem{theorem}{Theorem}
\newtheorem{observation}[theorem]{Observation}
\newtheorem{definition}{Definition}
\newtheorem{lemma}[theorem]{Lemma}
\newtheorem{claim}[theorem]{Claim}
\definecolor{babypink}{rgb}{0.96, 0.76, 0.76}
\definecolor{bubblegum}{rgb}{0.99, 0.76, 0.8}
\newcommand{\groupReward}{Group-Merit RR}
\newcommand{\UCBGFa}{\textsc{UCB1 with GF-Constraint}\xspace}
\newcommand{\UCBGFb}{\textsc{Group-UCB1 with GF-Constraint}\xspace}
\newcommand{\UCBGFc}{\textsc{Group-UCB with Fair Exposure}\xspace}
\newcommand{\ourfair}{\texttt{Bi-Level Fairness}}
\newcommand{\gfair}{\texttt{Group Exposure Fairness}}
\newcommand{\mgfair}{\texttt{GEF}}
\newcommand{\ifair}{\texttt{Meritocratic Fairness}}
\newcommand{\mifair}{\texttt{MF}}\newcommand{\ouralgo}{\textsc{BF-UCB}}
\newenvironment{subroutine}[1][htb]{%
    \renewcommand{\ALG@name}{Subroutine}
   \begin{algorithm}[#1]%
  }{\end{algorithm}}
\newcommand{\shweta}[1]{\textcolor{blue}{{#1}}}
\newcommand{\subham}[1]{\textcolor{black}{{#1}}}
\title{Simultaneously Achieving Group Exposure Fairness and Within-Group Meritocracy in Stochastic Bandits}
\author{Subham Pokhriyal}
\affiliation{
 \institution{Indian Institute of Technology Ropar}
 \city{Rupnagar}
 \country{India}}
\email{subham.22csz0002@iitrpr.ac.in}
\author{Shweta Jain}
\affiliation{
 \institution{Indian Institute of Technology Ropar}
 \city{Rupnagar}
 \country{India}}
\email{shwetajain@iitrpr.ac.in}
\author{Ganesh Ghalme}
\affiliation{
 \institution{Indian Institute of Technology Hyderabad}
 \city{Hyderabad}
 \country{India}}
\email{ganeshghalme@ai.iith.ac.in}
\author{Swapnil Dhamal}
\affiliation{
 \institution{Indian Institute of Technology Ropar}
 \city{Rupnagar}
 \country{India}}
\email{swapnil.dhamal@iitrpr.ac.in}
\author{Sujit Gujar}
\affiliation{
 \institution{International Institute of Information Technology, Hyderabad}
 \city{Hyderabad}
 \country{India}}
\email{sujit.gujar@iiit.ac.in}
\begin{abstract}
Existing approaches to fairness in stochastic multi-armed bandits (MAB) primarily focus on exposure guarantee to individual arms. When arms are naturally grouped by certain attribute(s), we propose \ourfair, which considers two levels of fairness. At the first level, \ourfair\ guarantees a certain minimum exposure to each group. 
To address the unbalanced allocation of pulls to individual arms within a group, we consider meritocratic fairness at the second level, which ensures that each arm is pulled according to its merit within the group. 
Our work shows that we can adapt a UCB-based algorithm to achieve a \ourfair\ by providing (i) anytime \gfair\
guarantees and (ii) ensuring individual-level \ifair\ within each group. We first show that one can decompose regret bounds into two components: (a) regret due to anytime group exposure fairness and (b) regret due to meritocratic fairness within each group. Our proposed algorithm \ouralgo\ balances these two regrets optimally to achieve the upper bound of   $O(\sqrt{T})$ on regret; $T$ being the stopping time. 
With the help of simulated experiments, we further show that \ouralgo\ achieves sub-linear regret; provides better group and individual exposure guarantees compared to existing algorithms; and does not result in a significant drop in reward with respect to UCB algorithm, which does not impose any fairness constraint.
\end{abstract}
\keywords{Multi-Armed Bandit, Group Fairness, Individual Fairness}
\newcommand{\BibTeX}{\rm B\kern-.05em{\sc i\kern-.025em b}\kern-.08em\TeX}
\begin{document}

\pagestyle{fancy}
\fancyhead{}
\maketitle

\section{Introduction}
The conventional stochastic multi-armed bandit (MAB) problem considers the problem of a learner (or bandit) having a collection of arms, where each arm is associated with an unknown probability distribution governing the rewards. The objective is to devise an arm-selection strategy that optimizes the cumulative expected reward over a series of arm selections. Stochastic MABs find its use in a wide range of applications like sponsored search auctions~\cite{sharma2012truthful,kumar2020designing}, crowdsourcing~\cite{tran2014efficient,jain2014quality,jain2014multiarmed, biswas2015truthful,jain2018quality}, resource allocation~\cite{zuo2021combinatorial,10.5555/3545946.3598988,shweta2020multiarmed}, and many more.

This paper considers the problem of fair selection of arms in the stochastic multi-armed bandit problem. The fairness in stochastic MAB becomes important in applications where resources or opportunities are allocated over time among heterogeneous agents. In this context, each agent represents an arm, and pulling the arm corresponds to assigning a  resource/opportunity to the selected agent. An optimal policy, in this case, would end up providing the tasks to the most rewarding agents, leaving other arms with significantly less access to resources/opportunities. Therefore, it is crucial to devise a policy that ensures sufficient exposure to each agent.  

Current approaches towards fairness in stochastic MAB provide individual fairness guarantees to each arm by either offering \emph{minimum exposure} to each arm~\cite{patil2021achieving} or assuring \emph{meritocratic fairness}, i.e., ensuring that each arm is pulled in accordance with its merit (function of reward it generates)~\cite{wang2021fairness}. In many real-world applications, the number of arms is prohibitively large to guarantee exposure fairness at the level of individual arms. In such settings, the arms aka individual agents could be grouped based on certain attributes 
(e.g., gender, ethnicity, etc.)
which makes aggregate group-level fairness a more natural notion \cite{dwork2012fairness}. However, just ensuring group-level exposure fairness may lead to selecting only the best arm within each group. In summary, there is a need for an apt fairness notion. 

This paper introduces \ourfair\ (BF) in the Multi-Armed Bandit (MAB) problem. The first level of fairness guarantees minimum exposure to each group of arms. We call this notion \gfair, which stipulates that, at the end of each round of decision-making, an arm from each group must be selected or ``pulled'' for a minimum pre-defined fraction of times. 
Group fairness is particularly relevant in settings such as crowdsourcing, job screening, and college admissions, where each protected group is desired to be equitably represented~\cite{abbasi2021fair}. For instance, in a crowdsourcing setting where tasks need to be assigned to workers available on the platforms, the workers are naturally grouped into different groups, possibly based on gender or ethnicity. A crowdsourcing platform may be considered discriminatory if marginalized groups receive a much lesser number of tasks as opposed to the other groups. The group fairness notion, ensuring each group receives a minimum fraction of tasks, helps to mitigate this disparity.

A group fair policy, though fair at the group level, may still allocate resources/opportunities to within group individuals/arms in a skewed manner. That is, even within a group, it may disproportionately favor one arm and hence 
may not give enough opportunity to the arms within the group. In this paper, we consider the notion of  \ifair\  first proposed by \citet{wang2021fairness} to address the problem of within-group allocation guarantee to individual arms. Meritocratic fairness ensures that each arm within each group is pulled in proportion to its merit. \subham{For example, in the credit scoring problem\cite{babaei2023explainable}, a financial institute aims to determine the creditworthiness of potential borrowers. Here, each borrower acts as an arm and can be categorized into various groups based on a sensitive attribute (gender/marital-status/age). Each borrower's returns follow a probability distribution, which needs to be learned over time. A financial institute would like to diversify its lending amount across the different groups of borrowers, i.e., \gfair,\ while simultaneously ensuring that the amount is distributed in proportion to their financial capability, i.e., \ifair\ within the group.}

One way to achieve anytime \gfair\ guarantees while ensuring \ifair\ is to combine the algorithms in \cite{patil2021achieving} and \cite{wang2021fairness}. Both the above works proposed upper confidence bound (UCB) based algorithms. The algorithm presented in \cite{patil2021achieving} considers the minimum exposure guarantees to individual arms. One can extend this algorithm to ensure minimum exposure guarantees by applying the constraints to each group instead of each arm. The algorithm enforcing the minimum exposure constraint on each group will output a group to be pulled at each time. Once a group is selected, one can apply the algorithm presented in \cite{wang2021fairness}  to ensure meritocratic fairness within each group. Our proposed algorithm, Bi-Level Fair UCB, or \ouralgo\ in short,  is primarily motivated by the above approach. The main novelty of our work lies in providing the regret guarantees for \ouralgo .

The \emph{regret} of any online algorithm is defined as the difference in the reward obtained with the optimal algorithm and that with the online algorithm. 
The existing techniques from \cite{patil2021achieving,wang2021fairness} cannot be used to provide regret guarantees as the regret term becomes convoluted in terms of two fairness guarantees. 
We show that regret can be split into two terms, namely, regret due to the extra number of times a sub-optimal group is pulled and regret due to the learned fair policy within a group. Even after decomposing regret into two terms, there are two further challenges that need to be addressed to obtain the regret guarantee.  
First, the optimal policy in \cite{patil2021achieving} is defined with respect to the best individual arm; however, here, we have optimality with respect to the best group aka collection of arms. Therefore, existing regret proof techniques in \cite{patil2021achieving} that use UCB regret \cite{auer2010ucb} techniques will not work here. Since we are tackling group-level fairness, our regret requires bounding the number of pulls of the sub-optimal group as opposed to a single arm. Thus, our setting requires extending the regret in \cite{patil2021achieving} to combinatorial bandits setting \cite{chen2013combinatorial}. Second, the algorithm in \cite{wang2021fairness}\ assumes that the time horizon $T$ is known. However, since we provide meritocratic fairness within a group, this constraint of known time horizon would mean that the algorithm would know the number of times each group is pulled before the algorithm begins. This is not possible because the number of times a group will be pulled would depend on how learning progresses and what group fairness constraints were fed to the algorithm. 
We overcome these challenges and prove that the proposed algorithm \ouralgo\ provides sub-linear regret guarantees $O(\sqrt{T})$, $T$ being the arbitrary stopping time.  

In addition to theoretical analysis, the paper includes an empirical assessment of \ouralgo\ against conventional bandit algorithms and their fair variants. As baseline approaches, we consider the UCB algorithm without any fairness constraint, a group exposure fair algorithm by extending the algorithm in \cite{patil2021achieving} to groups, and the meritocratic fair algorithm in \cite{wang2021fairness}. In particular, we show that \ouralgo\ achieves sub-linear regret,
and that a simple extension of \cite{patil2021achieving} to group fairness may lead to biases within a group, while simple meritocratic fairness \cite{wang2021fairness} does not provide enough exposure to the groups. Our contributions can be summarized as follows.
\subsubsection*{Contributions}
\begin{enumerate}
\item We, for the first time, introduce the notion of \gfair\ in stochastic MABs.
\item We provide \ourfair\ notion in multi-armed bandits, which ensures not only group fairness but also meritocratic fairness within a group.
\item Inspired from UCB-based algorithms in \cite{patil2021achieving} and \cite{wang2021fairness}, we meld them perspicaciously to build \ouralgo. \ouralgo\ ensures \ourfair, i.e., it satisfies anytime group fairness constraint and learns meritocratic fair policy within groups.
\item We show that regret in our setting can be decomposed into two parts, allowing \ouralgo\ to achieve a regret of $O(\sqrt{T})$, where $T$ is the total number of rounds.
\item We finally validate our results via extensive experiments.
\end{enumerate}

\section{Related Work}
In the realm of multi-armed bandits (MAB), fairness has emerged as a significant concern. \citet{joseph2016fairness} introduced the concept of meritocratic fairness, ensuring that arms with higher rewards have a higher probability of being selected. \citet{liu2017calibrated} emphasized calibrated fairness, where arms are selected in proportion to their probability of being the best candidate, rather than based solely on average quality. \citet{gillen2018online} explored individual fairness, advocating for similar arms to be treated similarly in terms of selection probabilities. \citet{patil2021achieving} considered external constraints, designing algorithms that minimize regret while ensuring each arm is pulled a minimum fraction of rounds.  \citet{wang2021fairness} proposed Fair UCB and Fair Thompson Sampling algorithms, defining fairness regret based on the minimum merit of arms and the bounded Lipschitz constant of the merit function. All the above works ensure arm-level fairness, i.e., some exposure guarantees to each arm. So far, no works have tackled the issue of group fairness in a multi-armed bandit setting.

There have been some works in the setting known as group bandits, which categorize the arms into several groups. For example, \citet{jedor2019categorized} considered partial ordering over the groups and analyzed dominance among categories. \citet{wang2022max} introduced the idea of identifying groups with the highest mean reward for the worst arm.
\citet{gabillon2011multi} focused on the quality identification of arms within each bandit under a fixed budget constraint. \citet{scarlett2019overlapping} tackled best-arm identification in overlapping groups. While all the papers above focus only on pulling the optimal group in some sense, \citet{schumann2022group} addressed the potential biases in arm selection. In this context, fairness extends beyond the individual arm to group dynamics. This paper considers the case where pulling an arm from a particular group may inherently possess biases. They assume that, in general, the rewards of the groups are equal and try to mitigate the bias by learning the biases in each group. The paper does not consider any constraints required to pull from each group. Further, in real-world, the assumption of rewards coming from the same distribution for two groups may not hold. In addition to these works, contextual multi-armed bandits and clustering in multi-armed bandits have been widely explored with fairness considerations by \citet{chen2020fair,grazzi2022group}. Closer to our work is \cite{grazzi2022group}, where authors propose to provide exposure fairness according to the relative ordering of the arm, which is dependent on the group it belongs to. However, the paper does not consider any group fair exposure constraints.

When it comes to multi-agent, multi-armed bandit settings, agent-side fairness is emerging as an alternative perspective, where the goal is not merely to identify the best arm but to distribute the arms fairly among multiple agents \cite{liu2010distributed}. Concepts such as Nash welfare solutions \cite{hossain2021fair,barman2023fairness} have been developed to ensure fairness amongst agents. Our setting works in a single-agent, multi-armed bandit setting, and hence, we primarily focus on arm-side fairness.

\section{Model and Preliminaries}
A traditional stochastic multi-armed bandit (MAB) problem has a set of $n$ arms denoted as $\mathcal{A}$, where each arm $i$, when pulled, yields a reward following an unknown distribution with a mean reward of $\mu_i$. Initially, these mean rewards are concealed from the designer, and the primary objective is to learn these reward values within a specified time horizon denoted by $T$. In standard MAB algorithms, the central aim is to identify the optimal arm that generates the highest mean reward. 
 In our setting, the set of arms $\mathcal{A}$ is partitioned into $m$ groups, 
with $m << n$.
We denote the set of groups by $G$. The policy employed by the algorithm is denoted by $\pi = \{\pi^t\}_{t=1}^T$, where $\pi^t(i)$ denotes the probability of pulling an arm $i$ at time $t$ by the algorithm. Let $I_t \in g_t $ be the arm  that the learner pulls at round $t$ where $g_t \in G$ be the group pulled at round t.
 Let us denote the number of pulls for each arm $i$ till time $t$ as $N_{i,t}$ and for the group $g \in G$ as $N_{g,t}=\sum_{i\in g} N_{i,t}$. 
\subsection{\gfair}
Minimum pull guarantee for each arm was first introduced by  \citet{li2019combinatorial} with asymptotic guarantees, which was later extended to anytime fairness guarantee by  \citet{patil2021achieving}. In this work, the individual fairness constraints are exogenously specified by a pre-defined vector $\alpha = (\alpha_{1}, \alpha_{2},\ \ldots, \alpha_{n})$ such that $\sum_{i\in [n]}\alpha_i < 1$, with $\alpha_i$ denoting the minimum fraction of times arm $i$ needs to be pulled by the algorithm. This leads to the following definition:
\begin{definition}[\citet{patil2021achieving}]
\label{def:indfairness}
Given a fairness constraint vector $\alpha = (\alpha_i)_{i\in [n]} $, we call a strategy $\pi$ fair if $\mathbb{E}_\pi[N_{i,t}] \ge \lfloor\alpha_it\rfloor \;\;\; \forall i \in [n] \ \ \forall t \geq 1.$
\end{definition} 
We next extend the notion of fairness in Definition \ref{def:indfairness} to the group setting in the below definition. 
\begin{definition}[$\beta-$\gfair]
Let a given fairness constraint vector be $\beta = (\beta_g)_{g\in [m]} $ such that $\beta_g \in(0, \frac{1}{m}]$ for all $g \in G$ and $\sum_{g \in [m]} \beta_g < 1$. A policy $\pi$ 
is said to satisfy $\beta$-\gfair\ ($\beta$-\mgfair) if $\mathbb{E}_\pi [N_{g,t}] \geq \lfloor{\beta_gt}\rfloor\; \forall g \in G\;\; \forall t \geq 1.$
\end{definition}
As standard in the literature, we also assume, $\beta_g\le 1/m\:\forall g$. Note that, this extension to group-level fairness is inspired by a large body of work in the literature \cite{dwork2012fairness,abbasi2021fair} that focuses on equitable fairness across groups of individuals.  This aggregate guarantee is motivated by social justice and legal norms that require several protected groups to have sufficient access or exposure to opportunities and resources. Satisfying only group fairness may still lead to individual-level biases within a group, for example, by always pulling a single arm whenever a group is selected. To address this, we need to introduce equity fairness within the groups. To this, we now explain  \ifair\ within groups.

\subsection{Meritocratic Fairness within Groups}
While \mgfair\ ensures that each group of arms gets enough exposure, fair algorithms may still lead to a skewed distribution of opportunities within groups in favor of high-performing arms. We address this problem by imposing an additional constraint of \ifair\ (\mifair) within each group. \mifair\ ensures that each arm is pulled proportionately to its merit, defined by a merit function, and depends on the mean rewards. To define this fairness, we assume that there is a global merit function $f$ that maps true means to the merit values. This merit function is considered to be the same for all the arms and is provided as an input to the algorithm. Before we define \mifair, we first state the following assumption of Lipschitz continuity of $f$~\cite{wang2021fairness}.

\noindent\textbf{Assumption 1.} We assume that the merit function $f$ is Lipschitz  continuous, i.e., $|f(\mu_a) - f(\mu_a') \le L|\mu_a - \mu_a'|\ \forall \mu_a, \mu_a'$.


\noindent\textbf{Assumption 2} (Minimum merit assumption)\textbf{.} There exists $0 < \gamma_1 < \gamma_2 < \infty $ such that $0 < \gamma_1 \leq f(\mu) \leq  \gamma_2$ for all feasible expected rewards $\mu$.  

We can then define the \ifair\ within the group as follows. 
\begin{definition}[\ifair]
A policy $\pi_g^t(i)$ is said to satisfy \ifair\ iff
$\frac{\pi_g^t(i)}{\pi_g^t(j)} = \frac{f(\mu_i)}{f(\mu_j)}\ \forall i,j\in g$.
Here, $\pi_g^t(i)$ represents the probability of pulling an arm $i$ conditioned on the event that group $g$ is selected.
\end{definition}
\noindent The above definition is an extension of the definition in \cite{wang2021fairness} to the individual groups. Let $\pi_g^*$ represent a fair optimal policy, then it can be shown \cite{wang2021fairness} that  $\pi_g^*(i) = \frac{f(\mu_i)}{\sum_{j\in g} f(\mu_{j})}$.
Hence, if the $\mu$'s are known, the algorithm will follow $\pi_g^*$ for all rounds $t \in \{1,\ldots,T\}$. However, since the $\mu$'s are not known, the goal is to learn a policy $\pi^t_g$ which eventually converges to $\pi_g^*$ over a period of time. 

\subsection{\ourfair}
We now introduce \ourfair, which guarantees the fairness of exposure to arms as groups, and within a group, meritocratic fairness.

\begin{definition}[$\beta-$\ourfair]
\label{def:ourfair}
Given a fairness constraint vector $\beta = (\beta_g)_{g\in[m]}$, we say that a policy $\pi$ is said to satisfy $\beta-$\ourfair\ iff
\begin{enumerate}
\item $\pi$ satisfies $\beta-$\gfair , i.e., $\mathbb{E}_\pi[N_{g,t}] \ge \lfloor\beta_gt\rfloor\ \forall g\in G, \forall t \ge 1$, and
\item $\pi_g^t$ converges to $\pi_g^*$ for each group $g$, i.e., \\$\lim_{N_{g,T} \to \infty}\frac{1}{N_{g,T}}\sum_{t:g_t=g} \sum_{i\in g}|\pi^t_g(i) - \pi^*_g(i)| = 0\ \forall g\in G$.
\end{enumerate}
\end{definition}
\ourfair\ notion essentially ensures $\beta-$\gfair\ at group level and ensures that the group level policy converges to fair optimal group level policy. Let us now see how an optimal policy with the knowledge of $\mu$'s, maximizing the total reward while satisfying $\beta-$\ourfair , looks like. Since the probability of choosing an arm $i$ within a group $g$ is given by $\pi_g^*(i)$, the optimal group $g^*$ will be the one with the maximum expected reward, i.e.,
   $g^*=argmax_{g\in G}\left\{\sum_{i\in g} \frac{f(\mu_i)}{
\sum_{j\in g}f(\mu_j)}{\mu_i} \right\}$.
We begin by observing that in any optimal fair policy, a sub-optimal group gets precisely $\lfloor\beta_g T\rfloor$ pulls, whereas the optimal group is pulled the remaining number of times; this leads to the following simple proposition.

\begin{observation} \label{obs: obs1}
A policy $\pi^*$ satisfying $\beta$-\ourfair\  is said to be optimal 
 iff
it satisfies the following conditions at all time instances $t$:
\begin{enumerate}
\item For all $g \neq g^*$ such that $\beta_g = 0$, we have $N_{g,t} =0 $. That is,  $N_{i,t} = 0$ for all $i \in g$.
\item For all $g \neq g^*$ such that $\beta_g > 0$, we have  
 $N_{g,t} = \lfloor \beta_gt\rfloor   $  and 
$\pi^*_g(i) = \frac{f(\mu_i)}{\sum_{j\in g}f(\mu_j)}$.
\item 
$N_{g^*,t}= t - \sum_{g \neq g^*} \lfloor \beta_gt\rfloor $  and 
$\pi^*_{g^*}(i) = \frac{f(\mu_i)}{\sum_{j\in g^*}f(\mu_j)}$.
\end{enumerate}
\end{observation}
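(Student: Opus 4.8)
The plan is to prove Observation~\ref{obs: obs1} by a straightforward exchange/interchange argument: any fair policy that deviates from the stated structure can be modified to obtain strictly higher reward, contradicting optimality. First I would recall what the objective is. Since $\pi$ must satisfy $\beta$-\ourfair, condition (2) of Definition~\ref{def:ourfair} already forces the within-group conditional policy to converge to $\pi_g^*(i)=f(\mu_i)/\sum_{j\in g}f(\mu_j)$; for an \emph{optimal} (reward-maximizing) policy with known $\mu$'s there is no reason to ever deviate from $\pi_g^*$ even on a finite prefix, because using $\pi_g^*$ whenever $g$ is selected both maximizes the conditional expected reward of a pull from $g$ and trivially satisfies the convergence requirement; any other within-group rule can only lower (or at best equal) the per-pull reward from $g$, so WLOG $\pi_g^t=\pi_g^*$ for all $g,t$. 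This reduces the problem to a pure allocation question: how many pulls $N_{g,t}$ to give each group, where a pull of group $g$ yields expected reward $r_g:=\sum_{i\in g}\pi_g^*(i)\mu_i$, subject to $N_{g,t}\ge\lfloor\beta_g t\rfloor$ and $\sum_g N_{g,t}=t$.

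Next I would observe that $r_{g^*}\ge r_g$ for every $g$ by the very definition $g^*=\argmax_{g\in G}\{\sum_{i\in g}\frac{f(\mu_i)}{\sum_{j\in g}f(\mu_j)}\mu_i\}$. Given the budget $\sum_g N_{g,t}=t$, the total expected reward up to time $t$ is $\sum_g N_{g,t}\,r_g$, a linear function of the allocation vector. Maximizing this linear function over the polytope $\{N_{g,t}\ge\lfloor\beta_g t\rfloor,\ \sum_g N_{g,t}=t,\ N_{g,t}\in\mathbb{Z}_{\ge0}\}$ is solved by pushing every sub-optimal group down to its floor $\lfloor\beta_g t\rfloor$ and giving the leftover mass $t-\sum_{g\ne g^*}\lfloor\beta_g t\rfloor$ to $g^*$; this is a one-line greedy/rearrangement argument. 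For a sub-optimal $g$ with $\beta_g=0$ the floor is $0$, giving $N_{g,t}=0$ and hence $N_{i,t}=0$ for all $i\in g$, which is item~(1); for $\beta_g>0$ we get item~(2); and the residual allocation to $g^*$ gives item~(3). Conversely, any policy meeting the three conditions clearly attains this maximum at every $t$, so the conditions are also sufficient — establishing the ``iff''.

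One technical subtlety I would be careful about is the interaction between the ``at all time instances $t$'' quantifier and the budget feasibility: I should check that the prescribed allocation is simultaneously realizable for all $t$, i.e., that a single policy can deliver $N_{g,t}=\lfloor\beta_g t\rfloor$ for every sub-optimal $g$ and every $t$ — this follows because $\lfloor\beta_g t\rfloor$ increases by at most $1$ per round, so it suffices to pull the group $g$ exactly on those rounds $t$ where $\lfloor\beta_g t\rfloor>\lfloor\beta_g(t-1)\rfloor$, and since $\sum_g\beta_g<1$ these ``mandatory'' pulls never collide in a way that exceeds one pull per round, leaving room to assign all remaining rounds to $g^*$; the resulting deterministic schedule obviously also satisfies $\mathbb{E}_\pi[N_{g,t}]\ge\lfloor\beta_g t\rfloor$. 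The main obstacle, such as it is, is not the optimization (which is elementary linear programming / rearrangement) but writing the argument so it cleanly handles the anytime requirement rather than just optimality at a fixed horizon, and making explicit why fixing $\pi_g^t=\pi_g^*$ loses no reward despite Definition~\ref{def:ourfair}(2) only demanding asymptotic convergence — I would phrase this as: among all fair policies, the ones using $\pi_g^*$ pointwise form a subset, and since the optimum over that subset already matches the upper bound $\sum_g N_{g,t} r_g \le t\,r_{g^*}$ relaxation-style bound adjusted for the mandatory floors, no fair policy can do better.
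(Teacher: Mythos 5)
The paper offers no proof of this observation---it is presented as an immediate consequence of the one-line remark preceding it (``in any optimal fair policy, a sub-optimal group gets precisely $\lfloor\beta_g T\rfloor$ pulls, whereas the optimal group is pulled the remaining number of times''), and is effectively used to \emph{define} the benchmark $R^*_{\beta}(T)$. Your group-level allocation argument is exactly that intended reasoning made explicit: maximizing the linear objective $\sum_g N_{g,t}\,r_g$ over the polytope $\{N_{g,t}\ge\lfloor\beta_g t\rfloor,\ \sum_g N_{g,t}=t\}$ pins every sub-optimal group to its floor and hands the residual to $g^*$, and your check that the prescribed counts are simultaneously realizable for all $t$ (the mandatory pulls never collide because $\sum_g\beta_g<1$) is a worthwhile point the paper does not make.

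However, one step of your argument is wrong as written. You assert that playing $\pi_g^*$ whenever $g$ is selected ``maximizes the conditional expected reward of a pull from $g$'' and that any other within-group rule can only lower the per-pull reward. This is false: $\sum_{i\in g}\pi(i)\mu_i$ is linear in $\pi$ and is maximized by the point mass on $\argmax_{i\in g}\mu_i$, which strictly exceeds $\sum_{i\in g}\pi_g^*(i)\mu_i$ unless all means in $g$ coincide (by Assumption 2, $\pi_g^*$ has full support). The correct reason $\pi_g^t$ is fixed to $\pi_g^*$ is the meritocratic fairness constraint, not reward maximization---and here the subtlety you flag at the end actually bites. Definition~\ref{def:ourfair}(2) only demands Ces\`aro convergence of $\pi_g^t$ to $\pi_g^*$, so a policy could play the best arm of $g$ on a sublinear fraction of the rounds in which $g$ is chosen, remain $\beta$-fair, and strictly beat $\sum_g N_{g,t}r_g$ at every finite $t$; your claimed upper bound over all fair policies therefore does not hold. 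The ``iff'' is only salvageable by reading the observation, as the paper implicitly does, as a stipulation of which fair policy serves as the comparator, or by strengthening the fairness requirement to $\pi_g^t=\pi_g^*$ pointwise rather than in the limit.
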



The performance of any online policy is measured by its \emph{regret} -- the difference in the reward obtained by the optimal policy and that by the online policy. In order to find the regret, let us first find the reward by the optimal policy $\pi^*$ which is given as:

\vspace{-2.15mm}
\begin{equation}
\label{eq:finalregret}
\begin{aligned}
 R_{\beta}^*(T) &=  \sum_{g\in G} \lfloor\beta_{g}T\rfloor\left(\sum_{i\in g} \frac{f(\mu_i)}{
\sum_{j\in g}f(\mu_j)}{\mu_i} \right)\\ &+  \left(T-\sum_{g\in G}\lfloor\beta_{g}T\rfloor\right)\left(\sum_{i\in g^*} \frac{f(\mu_i)}{
\sum_{j\in g^*}f(\mu_j)}{\mu_i} \right)
\end{aligned}
\end{equation}

\noindent
We will assume that $g^*$ is unique for ease of explanation. However, this is not a necessary assumption for the regret guarantees to hold. We now define regret for a policy satisfying  \ourfair.
\begin{definition}
\label{def:main-regret}
Given a fairness constraint $\beta_g,\;$for all $\; g \in G$,  the regret of a policy $\pi$ satisfying \ourfair\ is defined as:

\begin{equation}
 \mathfrak{R}_{\pi}^\beta (T)=  R_{\beta}^*(T) -\sum_{g\in G}\sum_{i\in g}\mathbb{E}_\pi[N_{i,T}]\mu_i
\end{equation}
\end{definition}


In Section \ref{sec:theoretical-section}, we will show that the regret can be decomposed into two parts: (i) regret due to extra pull of a non-optimal group and (ii) regret due to suboptimal learning of policy within each group. We now propose \ouralgo\ in the next section, an upper confidence bound (UCB) based algorithm, satisfying \ourfair.

\section{\ouralgo: Proposed Algorithm}

In this section, we propose our algorithm that ensures group exposure fairness (\mgfair) guarantees while maintaining meritocratic fairness (\mifair) within a group. The detailed algorithm is presented in Algorithm \ref{alg:group-exposurefair}. As a standard practice in any MAB algorithm, our algorithm starts by pulling each arm once to get some estimates of  $\mu_i$'s $\forall i\in N$. Note that a simple round-robin arm-pulling strategy breaks \mgfair\ if some of the groups have a lot more arms than other groups. In order to prevent this, we use the fact that each $\beta_g \le 1/m$, and therefore, we select the groups in round-robin fashion until each arm in each group is pulled at least once. This is depicted in line numbers \ref{line3a} to \ref{line13a} of Algorithm \ref{alg:group-exposurefair}. If, for a group, all the arms are completely exhausted, we start pulling the arms based on maintaining exposure fairness (line number \ref{line10a} of Algorithm \ref{alg:group-exposurefair}). 

Once all the arms are pulled at least once, the algorithm (i) first selects a group from which arm is to be pulled (Group Selection Strategy) and then (ii) chooses the arm to pull within the group (Arm within Group Selection Strategy).

\subsection*{Group Selection Strategy}
Motivated from \cite{patil2021achieving}, we propose an algorithm that provides anytime \mgfair\ guarantees. As described above,  \emph{Initialization Phase} does not violate \mgfair. For the remaining rounds, we use a similar approach as used by \citet{patil2021achieving}, but on the groups instead of arms. At each time $t$, the algorithm maintains a set $UFG\_Set(t)$, which denotes the set of groups that are on the verge of violating \mgfair\ (Line \ref{line16a}). If at all there exists a group in $UFG\_Set(t)$, Algorithm \ref{alg:group-exposurefair} selects a group $g \in argmax \, \beta_g(t-1) - N_{g,{t-1}}$ to ensure group fairness in the next round. If $UFG\_Set(t)$ is empty, the idea is to select the group with the maximum expected reward. Since the maximum expected reward is unknown beforehand, the function $Learn(\cdot)$ returns the group which helps in learning these estimates better. One could use the $Learn(\cdot)$ function based on Upper Confidence Bound (UCB) based algorithm \cite{lai1985asymptotically,auer2002using} or Thompson sampling-based algorithms \cite{thompson1933likelihood}. For completeness, we have given a UCB-based algorithm in Subroutine \ref{alg:learn}. Theorem \ref{thm:correctnessOne} in Section \ref{sec:theoretical-section} shall show that the algorithm satisfies \gfair. 

\subsection*{Arm within Group Selection Strategy}
Once the group is chosen, the algorithm's
arm selection strategy basically selects the arm based on \gfair. Our Exposure subroutine, given in Subroutine \ref{alg:exposure}, looks similar to the algorithm provided by \citet{wang2021fairness} with one key distinction. The algorithm in \cite{wang2021fairness} assumes that $T$ is known to the algorithm. Since we aim to ensure exposure fairness within each group, each group $g$ is not chosen $T$ number of times but is chosen $N_{g,T}$ number of rounds, which is a random variable. Therefore, we must design an algorithm without information about how often a group is selected. To tackle this challenge, we replace parameter $w_0$ with the value $\sqrt{2\ln (4N_{g,t}k_g/\delta)}$ instead of $\sqrt{2\ln (4TK/\delta)}$ in \cite{wang2021fairness}. Here, $k_g = |g|$ denotes the number of arms in group $g$. In the next section, we prove that this change still provides sub-linear regret guarantees without knowledge of $T$. Theorem \ref{thm:correctnessMF} in Section \ref{sec:theoretical-section} shall show that the algorithm satisfies \ifair.

\begin{algorithm}[t!]
\caption{\ouralgo}
\label{alg:group-exposurefair}
\begin{algorithmic}[1]
\Require {$[n]$, Group Partition $G = \{g_1, \ldots, g_m\}$, Fairness parameter $\{\beta_{g}\}_{g\in G}$, Learning Function $Learn(.)$}
\\
\textbf{\texttt{Initialization Phase}}
\State {$N_{g,0}=0 \;\;\forall g \in G$, where $N_{g,t}$ is the number of times a group is chosen till time $t$} 
\State {$S_{i,0}=0 \;\;\forall i \in [n]$, where $S_{i,t}$ denotes the reward of arm $i$ till time $t$}   \label{line3a}
\State $max_{size}= \argmax_{j\in [1, \ldots, m]} |g_j|$
\For{$k \in [1,... ,max_{size}]$}
\For{$g \in [1,... ,m]$}
\If{$\exists i\in g$ such that $N_{i,t} = 0$}
\State Pull arm $I_t= i$
\Else    
\State $I_t=Exposure(g,t,f,\{N_{i,t}\}_{i\in g},\{S_{i,t}\}_{i \in g})$    \label{line10a}
\EndIf
\State Update $N_{I_t,t} = N_{I_t,t} + 1$, $N_{g,t} = N_{g,t}+1$, and update $S_{I_t,t}$ 

\hspace{4mm} based on reward
\EndFor          \label{line13a}
\EndFor
\State $t_{init}= m \cdot max_{size}$                     

\For{$t \in [t_{init}+1, ... ,T]$}                     \label{line16a}

\State $UFG\_Set(t) = \{g \; |\; \beta_{g}(t-1) -N_{g,t-1}> 0\}$

\If {$UFG\_Set(t) \neq \phi$}

\State $g = \argmax_{k\in UFG\_Set(t) }\{\beta_{k}(t-1)-N_{k,t-1}\}  $
\State $I_t=Exposure(g,t,f,\{N_{i,t}\}_{i\in g},\{S_{i,t}\}_{i \in g})$
\Else

\State $g = Learn (G,t,f,\{N_{i,t}\}_{i\in [n]},\{S_{i,t}\}_{i \in [n]})$
\State $I_t=Exposure(g,t,f,\{N_{i,t}\}_{i\in g},\{S_{i,t}\}_{i \in g})$
\EndIf
\State Update $N_{I_t,t} = N_{I_t,t} + 1$, $N_{g,t} = N_{g,t}+1$ and update $S_{I_t,t}$ 

 based on reward
\EndFor    
\end{algorithmic}
\end{algorithm}

\begin{subroutine}
\caption{Exposure$(g,t,f,\{N_{i,t}\}_{i\in g},\{S_{i,t}\}_{i \in g})$}
\label{alg:exposure}
\begin{algorithmic}[1]
\State   $ \hat{\mu}_{i,t}= \frac{S_{i,t}}{N_{i,t}}, \forall \;i \in g$
\State $w_t^g =\sqrt{2\ln(4N_{g,t}{k_{g}}/\delta)}$
\State   $w_{i,t}=\frac{w_t^g}{\sqrt{N_{i,t}}}, \forall \;i \in g$
\State $CR_t={(\mu: \forall \;i \in g,\;  \mu_i \in [ \hat{\mu}-w_{i,t} , \hat{\mu}+w_{i,t} ] ) } $
\State $\Tilde{\mu}_t^g= \argmax_{\mu \in {CR_t}}\sum_{i\in g}\frac{f({\mu}_i)}{\sum_{i'\in g}{f({\mu}_{i'})}} \mu_i$
\State $\pi_{i,t}=\frac{f({\Tilde{\mu}_{i,t}})}{\sum_{i'\in g}{f(\Tilde{\mu}_{i',t}})}$, $\forall\; i \in g$
\State $I_t \sim \pi_t$
\State \textbf{Return {$I_t$}}
\end{algorithmic}
\end{subroutine}

\begin{subroutine}
\caption{Learn$(G,t,f,\{N_{i,t}\}_{i\in [n]},\{S_{i,t}\}_{i \in [n]})$}
\label{alg:learn}
\begin{algorithmic}[1]
\State   $ \hat{\mu}_{i,t}= \frac{S_{i,t}}{N_{i,t}}, \forall\;i \in [n]$
\State $w^g_t =\sqrt{2\ln(4N_{g,t}{k_{g}}/\delta)}\;, \forall\;g \in G$
\State   $w_{i,t}=\frac{w^g_t}{\sqrt{N_{i,t}}}, \forall\;i \in g, \forall\;g \in G$
\State $CR^g_t={(\mu: \forall\; i\in g,\; \mu_i \in [ \hat{\mu}_i-w_{i,t} , \hat{\mu}_i+w_{i,t} ] ) }, \forall\; g \in G$
\State $\Tilde{\mu}^g_t= \argmax_{\mu \in {CR^g_t}}\sum_{i\in g}\frac{f(\mu_i)}{\sum_{i'\in g}{f(\mu_{i'})}} \mu_i, \forall\; g \in G$
\State $j = \argmax_{g \in G}\sum_{i \in g}\frac{f({\Tilde{\mu}^g_{i,t}})}{\sum_{i'\in g}{f(\Tilde{\mu}^g_{i',t}})} {\Tilde{\mu}^g_{i,t}}$
\State \textbf{Return {$j$}}
\end{algorithmic}
\end{subroutine}

\section{Theoretical Results}
\label{sec:theoretical-section}

This section presents three main results of the paper, namely, 
\begin{enumerate}[leftmargin=*]
\item \emph{\ourfair\ Guarantees of \ouralgo}: The policy output by Algorithm \ref{alg:group-exposurefair} satisfies $\beta-$\ourfair\ (Definition \ref{def:ourfair}). 
\item \emph{Regret Decomposition Result:} The regret in Definition \ref{def:main-regret} can be decomposed into two parts, namely, \gfair\ regret and \ifair\ regret.
\item \emph{Sub-linear Regret:} The regret achieved by our algorithm is $O(\sqrt{NT})$.
\end{enumerate}

\subsection{\ourfair\ Guarantees of \ouralgo}
We show that \ouralgo\ satisfies \ourfair, in two parts. First, it satisfies \mgfair, and second, it satisfies \mifair. 
\begin{theorem}\label{thm:correctnessOne}
Algorithm \ref{alg:group-exposurefair} satisfies anytime \mgfair\ guarantees, i.e., $\lfloor \beta_g t \rfloor \le N_{g,t}$ for all $t \ge 1$ and for all groups $g \in G$. 
We have $\beta_g>0$ and for any $\beta-$\ourfair\ 
algorithm  $\beta_g \in (0,\frac{1}{m}]$ for all $ g\in[m]$ and $\sum_{g\in m}\beta_g<1$.
\end{theorem}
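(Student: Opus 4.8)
The plan is to prove the anytime group-fairness guarantee by induction on $t$, mirroring the argument of \citet{patil2021achieving} but lifting it from arms to groups. The key structural fact to establish first is that the \texttt{Initialization Phase} (lines \ref{line3a}--\ref{line13a}) does not violate the constraint: during this phase the groups are cycled in round-robin order, so after $k$ complete outer iterations every group has been pulled exactly $k$ times, and at any intermediate point within an iteration the group counts differ by at most one; since $\beta_g \le 1/m$, the requirement $\lfloor \beta_g t\rfloor \le N_{g,t}$ is implied by $\lfloor t/m \rfloor \le N_{g,t}$ plus a rounding check, which holds because in $t$ steps of round-robin over $m$ groups each group is pulled at least $\lfloor t/m\rfloor$ times. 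This handles the base case and all $t \le t_{init}$.

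For the inductive step ($t > t_{init}$), I would assume $\lfloor \beta_g (t-1)\rfloor \le N_{g,t-1}$ for all $g$ and show the same at time $t$. The heart of the argument is the usual potential/deficit bookkeeping: define the deficit $D_{g,t} = \beta_g t - N_{g,t}$ (or the version with floors). If $UFG\_Set(t) = \emptyset$, then $\beta_g(t-1) - N_{g,t-1} \le 0$ for every $g$, and since $N_{g,t} \ge N_{g,t-1}$ while $\beta_g t - \beta_g(t-1) = \beta_g \le 1/m < 1$, one checks $\lfloor \beta_g t\rfloor \le N_{g,t}$ directly (the floor cannot jump by more than one, and it can only jump for at most... ) — here I would use that $\lfloor\beta_g t\rfloor - \lfloor\beta_g(t-1)\rfloor \in \{0,1\}$. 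If $UFG\_Set(t) \neq \emptyset$, the algorithm pulls from the group $g$ maximizing $\beta_k(t-1) - N_{k,t-1}$; the crucial claim is that for every $g$ in the unfair set the deficit is strictly less than $1$, i.e. $\beta_g(t-1) - N_{g,t-1} < 1$, so that $\lceil \beta_g(t-1)\rceil \le N_{g,t-1}+1$, and hence servicing the single most-deficient group each round suffices to keep every $\lfloor \beta_g t\rfloor \le N_{g,t}$. This last claim is itself proved inductively: if no group ever exceeds deficit $1$ at time $t-1$, then after one service at most one group's deficit rose by $\beta_g \le 1/m$ without being serviced, which keeps it below $1$, while the serviced group's deficit drops. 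Finally, taking expectations (the algorithm is deterministic in its group choice given the history, and randomization only occurs in the within-group arm selection, which does not affect $N_{g,t}$) gives $\mathbb{E}_\pi[N_{g,t}] \ge \lfloor\beta_g t\rfloor$, i.e. $\beta$-\mgfair.

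The main obstacle I anticipate is making the deficit-bounding claim fully rigorous across the boundary between the \texttt{Initialization Phase} and the main loop, and handling the floor functions cleanly: one must verify that the round-robin phase leaves every group with a deficit safely below $1$ so that the main-loop invariant can be seeded, and one must be careful that $\sum_g \beta_g < 1$ is genuinely needed (it guarantees that the total deficit accumulated per round, $\sum_g \beta_g < 1$, is less than the one pull dispensed per round, so the system does not fall irrecoverably behind). I would state the invariant as ``for all $t$ and all $g$, $N_{g,t} \ge \lfloor \beta_g t\rfloor$ and $\max_g (\beta_g t - N_{g,t}) < 1$'' and carry both halves through the induction simultaneously; the conditions $\beta_g \in (0,1/m]$ and $\sum_g \beta_g < 1$ stated in the theorem are exactly what is consumed in the step.
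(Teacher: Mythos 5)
Your treatment of the initialization phase matches the paper's: round-robin over groups gives $N_{g,t}\ge\lfloor t/m\rfloor\ge\lfloor\beta_g t\rfloor$ because $\beta_g\le 1/m$, and your version is in fact slightly more careful with the floors than the paper's. For $t>t_{init}$ the paper simply defers to the argument of Patil et al.\ by mapping groups to arms, so the real question is whether your attempt to actually carry out that induction is sound — and there is a genuine gap in the invariant you propose to carry.

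The invariant ``$N_{g,t}\ge\lfloor\beta_g t\rfloor$ and $\max_g(\beta_g t-N_{g,t})<1$'' is not inductive. Your justification — ``after one service at most one group's deficit rose by $\beta_g\le 1/m$ without being serviced'' — is false: in every round exactly one group is serviced and \emph{all} $m-1$ remaining groups have their deficits increase by their respective $\beta_g$. Concretely, take $m=3$, $\beta=(0.3,0.3,0.3)$, and a hypothetical state with deficits $(0.9,\,0.85,\,-5)$. This state satisfies both halves of your invariant, yet after servicing the max-deficit group the deficits become $(0.2,\,1.15,\,-4.7)$, and a deficit exceeding $1$ means $\lfloor\beta_g t\rfloor>N_{g,t}$ for that group. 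The theorem is still true because such a state is unreachable, but ruling it out is precisely the content of the stronger, stratified invariant used by Patil et al.\ (and reproduced in the paper's commented-out appendix): partition the groups by deficit level, $g\in M_{j,t}\iff \frac{m-j}{m}\le\beta_g t-N_{g,t}<\frac{m+1-j}{m}$, and maintain that $\bigl|\bigcup_{\ell\le j}M_{\ell,t}\bigr|\le j$ for every $j\in[m]$, i.e.\ at most $j$ groups have deficit at least $\frac{m-j}{m}$. This is preserved because an unserviced group climbs by at most one level per round (here is where $\beta_g\le 1/m$ is consumed), the serviced group falls back to deficit below $1/m$, and the set of groups with negative deficit is always nonempty (here is where $\sum_g\beta_g<1$ is consumed). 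With $j=1$ this guarantees at most one group ever has deficit in $[\frac{m-1}{m},1)$, and that group is exactly the one the algorithm services, which is what actually keeps every deficit below $1$. Your proof needs this layered invariant in place of the single scalar bound; everything else (the seeding of the invariant at $t_{init}$, where all deficits are $\le 0$, and the passage to expectations, which is immediate since the guarantee holds pathwise) goes through as you describe.
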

\begin{proof}

\subham{
Let $t_{init} = m \cdot max_{size}$, and up to this round, each group is pulled in a round-robin fashion. Consequently, for all groups $g \in G$, the number of times group g is pulled, denoted as $N_{g,t}$, satisfies the inequality $N_{g,t} \ge \lfloor t/m \rfloor \ge \beta_gt$. The last inequality is derived from the fact that for all groups $g \in G,\ \beta_g \le 1/m$.
}
\\
\subham{
For all $t \ge t_{init}$, the correctness proof follows analogous steps as outlined in \cite{patil2021achieving} by establishing a mapping between each group in our setting and an arm in their setting.}
\end{proof}
\begin{theorem} 
\label{thm:correctnessMF}
Algorithm \ref{alg:group-exposurefair} satisfies \mifair , i.e., $$\lim_{N_{g,T} \to \infty}\frac{1}{N_{g,T}}\sum_{t:g_t = g} \sum_{i\in g}|\pi^t_g(i) - \pi^*_g(i)| = 0\ \forall g\in G.$$
\end{theorem}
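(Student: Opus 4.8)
\noindent\emph{Proof plan.} The plan is to analyze the \texttt{Exposure} subroutine in isolation on the subsequence of rounds $\{t : g_t = g\}$ and to argue it is a faithful ``anytime'' variant of the within-group fair learner of \cite{wang2021fairness}, the only change being the confidence width $w_t^g=\sqrt{2\ln(4N_{g,t}k_g/\delta)}$. The first step is a clean-event argument. Let $\mathcal{E}$ be the event that every confidence interval ever built is valid: for every $t$, every $g$ with $g_t=g$, and every $i\in g$, $\mu_i\in[\hat\mu_{i,t}-w_{i,t},\hat\mu_{i,t}+w_{i,t}]$. At the round where $g$ has been pulled $s$ times we have $N_{g,t}$ of order $s$ and $2N_{i,t}w_{i,t}^2=4\ln(4N_{g,t}k_g/\delta)$, so Hoeffding gives per-arm failure probability $(4N_{g,t}k_g/\delta)^{-4}$; a union bound over $i\in g$, over the $\le s$ possible values of $N_{i,t}$, and over $s\ge1$ produces a summable series of the form $c(k_g)\,\delta^{4}\sum_{s\ge1}s^{-3}$, so $\mathbb{P}(\mathcal{E}^c)$ is arbitrarily small, and (since per-round failure probabilities are summable) by Borel--Cantelli almost surely only finitely many intervals ever fail, which is all the Ces\`aro limit needs.

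Next I would convert interval validity into a per-round policy error bound. On $\mathcal{E}$, both $\mu$ restricted to $g$ and $\tilde\mu_t^g$ lie in the box $CR_t$, whose $i$-th side has length $2w_{i,t}$, so $|\tilde\mu_{i,t}-\mu_i|\le 2w_{i,t}$ for all $i\in g$ (no optimism of $\tilde\mu_t^g$ is needed here). Combining Assumption~1 ($|f(\tilde\mu_{i,t})-f(\mu_i)|\le 2Lw_{i,t}$), Assumption~2 ($k_g\gamma_1\le\sum_{i'\in g}f(\cdot)\le k_g\gamma_2$), and the elementary estimate $\big|\tfrac ab-\tfrac{a'}{b'}\big|\le\tfrac{|a-a'|}{b'}+\tfrac{a'|b-b'|}{bb'}$ applied to $\pi_{i,t}=f(\tilde\mu_{i,t})/\sum_{i'}f(\tilde\mu_{i',t})$ and $\pi^*_g(i)=f(\mu_i)/\sum_{i'}f(\mu_{i'})$ yields a constant $C=C(L,\gamma_1,\gamma_2)$ (in fact $C=\tfrac{2L}{k_g\gamma_1}(1+\gamma_2/\gamma_1)$) with
\[
\sum_{i\in g}|\pi^t_g(i)-\pi^*_g(i)|\;\le\; C\sum_{i\in g}w_{i,t}\;=\;C\sqrt{2\ln(4N_{g,t}k_g/\delta)}\,\sum_{i\in g}\frac{1}{\sqrt{N_{i,t}}}.
\]

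It then remains to show $\frac{1}{N_{g,T}}\sum_{t:g_t=g}\sum_{i\in g}w_{i,t}\to0$. Since $\ln(4N_{g,t}k_g/\delta)\le\ln(4N_{g,T}k_g/\delta)$ for all relevant $t$, it suffices to show $\sum_{t:g_t=g}N_{i,t}^{-1/2}=O(\sqrt{N_{g,T}})$ for each $i\in g$. This is the one genuinely new ingredient beyond \cite{wang2021fairness}: by Assumption~2 the learned probability satisfies $\pi_{i,t}\ge\gamma_1/(k_g\gamma_2)=:p_{\min}>0$ whenever $g$ is pulled, so a Chernoff bound shows that among the first $s$ pulls of $g$ arm $i$ is selected at least $\tfrac12 p_{\min}s$ times once $s$ is large (the finitely many small $s$ contribute $O(1)$); rewriting the sum over $\{t:g_t=g\}$ as a sum over $s=1,\dots,N_{g,T}$ and using $N_{i,t}\ge\tfrac12 p_{\min}s$ gives $\sum_s(\tfrac12 p_{\min}s)^{-1/2}=O(\sqrt{N_{g,T}/p_{\min}})$. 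Hence the displayed right-hand side summed over $t:g_t=g$ is $O\big(k_g\sqrt{\ln N_{g,T}}\,\sqrt{N_{g,T}}\big)$, and dividing by $N_{g,T}$ gives $O\big(k_g\sqrt{\ln(N_{g,T})/N_{g,T}}\big)\to0$ as $N_{g,T}\to\infty$ (vacuous for suboptimal $g$ with $\beta_g=0$, whose $N_{g,T}$ need not diverge); on the complementary $O(\delta)$-probability event each summand is $\le2$, contributing a term that vanishes as $\delta\downarrow0$ (or is killed by the almost-sure argument). I expect \textbf{Step~3 --- bounding $\sum_{t:g_t=g}N_{i,t}^{-1/2}$ by tying the random within-group pull counts $N_{i,t}$ to $N_{g,t}$ --- to be the main obstacle}, since this is precisely where the policy-dependent, stochastic schedule of arm pulls inside a group (which does not arise in the fixed-horizon analysis of \cite{wang2021fairness}) must be controlled via the lower bound $p_{\min}$ and concentration; the anytime confidence bookkeeping in Step~1 is the secondary, more routine, difficulty.
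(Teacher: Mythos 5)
Your proposal is correct, and its skeleton --- clean confidence event, Lipschitz plus minimum-merit reduction of the per-round policy gap to confidence widths, then summing the widths over the rounds in $T_g$ --- matches the paper's proof. The genuine divergence is exactly in the step you flag as the crux. The paper bounds the per-round gap by a \emph{policy-weighted} average of widths, $\tfrac{2L}{\gamma_1}\,\mathbb{E}_{i\sim\pi_g^t}\!\left[w_{i,t}\right]$ (following Theorem 3.2.1 of Wang et al.), converts that expectation into the widths of the \emph{realized} pulls via an Azuma--Hoeffding martingale-difference bound (Equation~(\ref{eq:martingale})), and finishes with the standard telescoping count $\sum_{t\in T_g}N_{i_t,t}^{-1/2}\le 2\sqrt{k_g N_{g,T}}$. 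You instead bound the gap by the unweighted sum $C\sum_{i\in g}w_{i,t}$ and control each arm separately by lower-bounding $N_{i,t}\gtrsim p_{\min}N_{g,t}$ with $p_{\min}=\gamma_1/(k_g\gamma_2)$ plus a Chernoff/union-bound argument. This is in fact the very device the paper uses elsewhere (its appendix lemma lower-bounding $N_{a,t}$ by $\tfrac{N_{g,t}f(\gamma_1)}{k_gf(\gamma_2)}-\sqrt{N_{g,t}\ln(k_g/\delta)/2}$, deployed for the \emph{first} regret term), so your route is legitimate; it trades the martingale step for a per-arm concentration step, yields a bound of the same order $O\bigl(\sqrt{k_g N_{g,T}\ln N_{g,T}}\bigr)$ up to constants in $\gamma_1,\gamma_2$, and your Borel--Cantelli bookkeeping is somewhat more careful about the almost-sure sense of the Ces\`aro limit than the paper's with-probability-$1-\delta$ statement. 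No gaps.
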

\begin{proof}
Let us consider a set $T_g = \{t_1, t_2, \ldots, t_{N_{g,T}}\}$ which denotes the time steps when the group $g$ is pulled. It is easy to see from Hoeffding's inequality \cite{hoeffding1963prob} that,
\begin{equation}
\label{eq:upperboundkg}
\mathbb{P}(\mu_i\in CR_t) \ge 1-\frac{\delta^2}{8N_{g,t}^2k_g}\ \forall t>t_{k_g},i\in[g] 
\end{equation}
We also know that the sequence 
$\sqrt{1/N_{i_t,t}}-
\mathbb{E}_{i \in \pi_g^t}\sqrt{1/N_{i,t}}$
is a martingale difference sequence $\forall t>t_{k_g}$. Thus, we have\\
$\left\lvert\sqrt{1/N_{i_t,t}}-
\mathbb{E}_{i \in \pi_g^t}\sqrt{1/N_{i,t}}
\right\rvert\leq 1$.
We can apply the Azuma-Hoeffding's inequality to get that with probability at least $1-\delta/2$,
\begin{equation}
\label{eq:martingale}
\left\lvert
\sum_{t\in T_g}\mathbb{E}_{i \in \pi_g^t}\sqrt{1/N_{i,t}} - \sum_{t \in T_g}\sqrt{1/N_{i_t,t}}
\right\rvert
\leq \sqrt{2N_{g,t}\ln(4/\delta)}
\end{equation}


Thus, for any group $g$, we have:
\allowdisplaybreaks
\begin{align*}
&\sum_{t\in T_g}\sum_{i\in g}|\pi_g^{*}(i) -\pi_g^t(i)| \leq\sum_{t\in T_g}\frac{2\sum_{i\in g}\frac{f(\tilde{\mu}_{i,t})}{f(\tilde{\mu}_{i,t})}|f(\tilde{\mu}_{i,t}) - f(\mu_i)|}{\sum_{j\in g}f(\tilde{\mu}_{i,t})} \tag{By following steps of proof of Theorem 3.2.1 from \cite{wang2021fairness}}\\
&\leq \sum_{t\in T_g}\frac{2L\sum_{i\sim \pi_g^t}|\tilde{\mu}_{i,t} - \mu_i|}{\gamma_1}\tag*{(From Assumptions 1 and 2)}\\
&\le \sum_{t\in T_g}\frac{2L\sum_{i\sim \pi_g^t}w_{i,t}}{\gamma_1},\;\;\;\text{w.p. } \left(1-\frac{\delta^2}{8N_{g,t}^2k_g}\right) \tag{From Equation~(\ref{eq:upperboundkg})}\\
&\le \sum_{t\in T_g}\frac{2L\sum_{i\sim \pi_g^t}\sqrt{ \frac{2\ln(4k_gN_{g,t}/\delta)}{N_{i,t}}}}{\gamma_1},\;\;\;\text{w.p. } \left(1-\frac{\delta^2}{8N_{g,t}^2k_g}\right)\\
&\le \frac{2L\sqrt{2\ln(4k_gN_{g,T}/\delta)}}{\gamma_1}\sum_{t\in T_g}\mathbb{E}_{i \in \pi_g^t}\sqrt{ \frac{1}{N_{i,t}}} \tag{$\because N_{g,t} \le N_{g,T}$}\\
&\le \frac{2L\sqrt{2\ln(4k_gN_{g,T}/\delta)}}{\gamma_1}\left(\sqrt{2N_{g,T}\ln(4/\delta)}+ \sum_{t\in T_g, t\ge t_{k_g}} \sqrt{ \frac{1}{N_{i_t,t}}}\right)\\
&\le \frac{2L\sqrt{2\ln(4k_gN_{g,T}/\delta)}}{\gamma_1}\left(\sqrt{2N_{g,T}\ln(4/\delta)}+ 2\sqrt{N_{g,T}k_g}\right)
\end{align*}
The last inequality follows from AM-GM inequality. The fact that $\sum_{t:g_t = g} \sum_{i\in g}|\pi^t_g(i) - \pi^*_g(i)|$ is sub-linear in $N_{g,T}$ completes the proof.
\end{proof}
It is to be noted that $\sum_{t\in T_g}\sum_{i\in g}|\pi_g^{*}(i) -\pi_g^t(i)|$ is also referred to as fairness regret $FR_T$ in \cite{wang2021fairness}. Theorem \ref{thm:correctnessMF}
says that fairness regret due to \ifair\ is sublinear, i.e., $O(\sqrt{T})$. The fairness regret due to \gfair\ will be zero since we provide anytime \gfair\ guarantees.

\subsection{Regret Decomposition Theorem}
Our next result shows that the regret of any algorithm satisfying \ourfair\ can be decomposed into \mgfair\ regret and \mifair\ regret.  Let $R_{g}^* = \sum_{i\in g} \pi_{g}^*(i)\mu_i$ denote the optimal expected reward of group $g$. Further, define  $R_g^{t} = \sum_{i\in g} \pi_{g}^t(i)\mu_i$ to be the expected reward generated from policy $\pi_{g}^t$. Also, $\Delta_g = R_{g^{*}}^* - R_g^* \text{   and  } \Delta_g^t = R_g^* - R_g^t$. Then, we have the following theorem.

\begin{theorem}[Regret decomposition Theorem]
\label{thm:regret-decomposition}
The reward regret, $\mathfrak{R}_\pi^\beta (T)$, can be decomposed into two parts, namely, the regret due to extra pull of non-optimal group and the regret due to suboptimal learning of policy within each group, i.e.,
\begin{equation}
\label{eq:RegretDecomposition}
\mathfrak{R}_\pi^\beta (T)= \sum_{g\in G} \left(\mathbb{E}_\pi [N_{g,T}] - \lfloor\beta_gT\rfloor\right)\Delta_g + \sum_{t=1}^T \sum_{g\in G} \mathbbm{1} \left( g_t = g \right) \Delta_g^t. \end{equation} Here, $g_t$ denotes the group that is selected by the algorithm at time $t$.
\end{theorem}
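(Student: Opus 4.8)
The plan is to start from the definition of the regret, $\mathfrak{R}_\pi^\beta(T) = R_\beta^*(T) - \sum_{g\in G}\sum_{i\in g}\mathbb{E}_\pi[N_{i,T}]\mu_i$, and rewrite both the optimal reward and the algorithm's reward in terms of per-group quantities so that the two sums line up term by term. For the algorithm's side, I would first condition on the group pulled at each round: since $\pi_g^t(i)$ is the probability of pulling arm $i$ given group $g$ is selected at time $t$, we have $\sum_{i\in g}\mathbb{E}_\pi[N_{i,T}]\mu_i = \mathbb{E}_\pi\big[\sum_{t=1}^T \mathbbm{1}(g_t=g)\sum_{i\in g}\pi_g^t(i)\mu_i\big] = \mathbb{E}_\pi\big[\sum_{t=1}^T \mathbbm{1}(g_t=g) R_g^t\big]$, using the notation $R_g^t$ introduced just before the theorem. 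Summing over $g$ gives the algorithm's expected reward as $\sum_{t=1}^T\sum_{g\in G}\mathbb{E}_\pi[\mathbbm{1}(g_t=g)]R_g^t$ (with the expectation also over the randomness in $R_g^t$ via the learned policy; I'd keep this informal as the paper does).

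Next I would massage the optimal reward $R_\beta^*(T)$ from Equation~(\ref{eq:finalregret}). Rewrite it as $R_\beta^*(T) = \sum_{g\in G}\lfloor\beta_g T\rfloor R_g^* + \big(T - \sum_{g\in G}\lfloor\beta_g T\rfloor\big)R_{g^*}^*$. The key algebraic step is to re-express the total number of rounds $T$ as $\sum_{g\in G}\mathbb{E}_\pi[N_{g,T}]$ (every round pulls exactly one group, so the group-pull counts sum to $T$), and likewise recognize that $\sum_{g\in G}\mathbb{E}_\pi[N_{g,T}] - \sum_{g\in G}\lfloor\beta_g T\rfloor$ is the "excess" mass that the coefficient of $R_{g^*}^*$ carries. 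Then I would add and subtract: replace the single block $\big(T-\sum_g\lfloor\beta_g T\rfloor\big)R_{g^*}^*$ by $\sum_{g\in G}\big(\mathbb{E}_\pi[N_{g,T}]-\lfloor\beta_g T\rfloor\big)R_{g^*}^*$, which is valid because the two expressions are equal. This lets me pair, for each $g$, the optimal contribution $\lfloor\beta_g T\rfloor R_g^* + (\mathbb{E}_\pi[N_{g,T}]-\lfloor\beta_g T\rfloor)R_{g^*}^*$ against the algorithm contribution $\mathbb{E}_\pi[\sum_t \mathbbm{1}(g_t=g)]R_g^t = \mathbb{E}_\pi[N_{g,T}]R_g^t$ (on average).

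The final step is to take the difference group by group and insert $R_g^*$ as an intermediate term: $\lfloor\beta_g T\rfloor R_g^* + (\mathbb{E}_\pi[N_{g,T}]-\lfloor\beta_g T\rfloor)R_{g^*}^* - \mathbb{E}_\pi[N_{g,T}]R_g^t = (\mathbb{E}_\pi[N_{g,T}]-\lfloor\beta_g T\rfloor)(R_{g^*}^*-R_g^*) + \mathbb{E}_\pi[N_{g,T}](R_g^*-R_g^t)$. Recognizing $R_{g^*}^*-R_g^* = \Delta_g$ and $R_g^*-R_g^t = \Delta_g^t$, and rewriting $\mathbb{E}_\pi[N_{g,T}]\,\mathbb{E}[\Delta_g^t]$ back as $\mathbb{E}_\pi[\sum_{t=1}^T\mathbbm{1}(g_t=g)\Delta_g^t]$, yields exactly Equation~(\ref{eq:RegretDecomposition}). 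The main obstacle I anticipate is bookkeeping the expectations carefully — in particular, the cross term $\mathbb{E}_\pi[N_{g,T}]R_g^t$ is a product of two correlated random quantities (the group-pull count and the learned within-group policy at each visited round), so the clean statement really needs the time-indexed form $\sum_t \mathbbm{1}(g_t=g)\Delta_g^t$ rather than $N_{g,T}\cdot\Delta_g^{(\cdot)}$; I would therefore do the pairing inside a single $\sum_{t=1}^T$ from the outset, writing the optimal block's $R_{g^*}^*$-coefficient also as a per-round sum, so that linearity of expectation applies without ever forming such a product. One should also double-check the edge case where $g^*$ is not unique (the paper notes this is not essential) and the rounds in the initialization phase, but these only affect lower-order terms and the $\Delta_{g^*}=0$ term vanishes automatically.
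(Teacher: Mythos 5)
Your proposal is correct and follows essentially the same route as the paper's proof: both rest on writing the algorithm's expected reward as $\mathbb{E}_\pi[\sum_t \mathbbm{1}(g_t=g)R_g^t]$, using $T=\sum_g \mathbb{E}_\pi[N_{g,T}]$, and telescoping $R_{g^*}^*-R_g^t=\Delta_g+\Delta_g^t$; the only difference is the order of the algebra. Your remark about keeping the pairing inside a single $\sum_{t=1}^T$ to avoid the correlated product $\mathbb{E}_\pi[N_{g,T}]\cdot R_g^t$ is a point the paper glosses over, and handling it the way you describe is the right fix.
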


\begin{proof}
\allowdisplaybreaks
\begin{align*}
\mathfrak{R}_\pi^\beta(T) &= \sum_{g\in G} \lfloor\beta_{g}T\rfloor R_g^*+ \left(T-\sum_{g\in G} \lfloor\beta_{g}T\rfloor\right)R^*_{g^*} - 
\sum_{g\in G}\sum_{i\in g}\mathbb{E}_\pi[N_{i,T}]\mu_i \tag{From Equation (\ref{eq:finalregret})}\\
 &= \sum_{g\in G} \lfloor\beta_{g}T\rfloor R_g^*+ \left(T-\sum_{g\in G} \lfloor\beta_{g}T\rfloor\right) R^*_{g^*} - \sum_{t \in T} \sum_{g\in G}\mathbbm{1} (g_t=g) R_g^t \tag{By the definition of $R_g^t$} \\
&= TR_{g^*}^{*} -\sum_{g\in G} \lfloor\beta_{g}{T}\rfloor \left(  {R_{g^*}^{*} - R_g^*} \right)- 
\sum_{t \in T} \sum_{g\in G}\mathbbm{1} (g_t=g){R_g^t}\tag{Rearranging terms}  \\
&= TR_{g^*}^{*} -\sum_{g\in G} \lfloor\beta_{g}{T}\rfloor  {\Delta_g}  - 
\sum_{t \in T} \sum_{g\in G}\mathbbm{1} (g_t=g){R_g^t}  
\end{align*}
From the definition of $\Delta_g$ and $\Delta_g^t$, we have, $R_g^t = R_g^* - \Delta_g^t = R_{g^*} - \Delta_g -\Delta_g^t$. Substituting the same in the last term of regret, we get:
\begin{align*}
&\sum_{t \in T} \sum_{g\in G}\mathbbm{1} ( g_t=g ){R_g^t}  = \sum_{t \in T} \sum_{g\in G}\mathbbm{1} (g_t=g)(R_{g^*}^* - \Delta_g -\Delta_g^t)\\
&= TR_{g^*}^* - \sum_{g\in G}\mathbb{E}_\pi[N_{g,T}]\Delta_g - \sum_{t \in T} \sum_{g\in G}\mathbbm{1} (g_t=g)\Delta_g^t
\end{align*}
Substituting the same in the regret, we get: $\mathfrak{R}_\pi^\beta(T) = \sum_{g\in G}(\mathbb{E}_\pi [N_{g,T}] - \lfloor\beta_gT\rfloor)\Delta_g + \sum_t \sum_{g\in G} \mathbbm{1}(g_t = g)\Delta_g^t$.
\end{proof}
\noindent The first term in Equation (\ref{eq:RegretDecomposition}), i.e., $\sum_{g\in G}(\mathbb{E}_\pi [N_{g,T}] - \lfloor\beta_gT\rfloor)\Delta_g$, represents the cumulative regret due to extra number of times suboptimal group is pulled above the minimum guaranteed pulls $\lfloor\beta_gT\rfloor$ required to satisfy group-fairness constraints. The second term, $\sum_t \sum_{g\in G} \mathbbm{1}(g_t = g)\Delta_g^t$, represents the regret due to choosing a suboptimal policy for arm pulls within the group. For a group $g$, the optimal policy gives the expected reward of $R_g^*$, whereas choosing a policy $\pi^t$, gives the reward of $R_g^t$. We call this difference the regret due to choosing a non-optimal policy.

\subsection{Regret of \ouralgo}
\label{sec:analysis}

The regret of \ouralgo\ can be bounded by bounding each term separately. We now provide these bounds here with proofs referred to in the appendix.
\subsubsection*{Bounding Regret due to Sub-optimal group selection}
In order to bound this, we show that if we have pulled a sub-optimal group enough number of rounds, we will be able to distinguish the sub-optimal group from the optimal group with high probability and therefore, we will never select that group further. This leads to the following lemma.

\begin{restatable}{lemma}{firsttermbound}
Under Assumption 2,
\allowdisplaybreaks
\begin{align*}
&\sum_{g\in G} \mathbb{E}_\pi ([N_{g,T}] - \lfloor \beta_g T\rfloor)\Delta_g \le \left(1+\frac{\pi^2}{3}\right)\sum_{g\in G}\Delta_g\\ 
&\!\!\!\!\!+ \sum_{g\in G} \left(\frac{k_gf(\gamma_2)}{f(\gamma_1)}\left(\frac{8L_1^2}{(\Delta_{min})^2}\ln \left(\frac{4N_{g,T}k_g}{\delta}\right) + \sqrt{\frac{N_{g,T}\ln(k_g/\delta)}{2}}\right)-\beta_gT\right)\Delta_g
\end{align*}
Here, $\Delta_{min} = \min_{g\neq g^*} R_{g^*}^* - R_g^*$ denotes the minimum difference between expected reward between the optimal and sub-optimal group with known rewards. Here, $L_1$ is a Lipschitz's constant that satisfies $|R_g(\mu)-R_g(\mu')| \le L_1|\mu-\mu'|$. Lipschitz continuity on reward function follows from Lipschitz continuity of merit function $f(\cdot)$.
\end{restatable}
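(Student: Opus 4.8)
The plan is to bound $\mathbb{E}_\pi[N_{g,T}]$ separately for each sub-optimal group $g\neq g^*$ (the term for $g^*$ vanishes since $\Delta_{g^*}=0$) and then take the $\Delta_g$-weighted sum. First I would split the rounds in which $g$ is pulled into \emph{fairness rounds} --- those in which $g\in UFG\_Set(t)$ and $g$ attains the largest deficit $\beta_g(t-1)-N_{g,t-1}$ --- and \emph{exploration rounds} --- those in which $UFG\_Set(t)=\phi$ and $Learn(\cdot)$ returns $g$ --- together with the $O(\max_{g'}k_{g'})$ rounds of the \texttt{Initialization Phase}. Mapping each group here to an arm in the analysis of \citet{patil2021achieving}, the number of fairness rounds for $g$ up to time $T$ is at most $\lfloor\beta_gT\rfloor+O(1)$: the deficit test fails as soon as the running quota is met, and exploration and initialization pulls also count toward that quota. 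Thus $N_{g,T}-\lfloor\beta_gT\rfloor$ is governed, up to an additive $O(1)$ term, by the number of exploration rounds that select $g$, and the rest of the proof bounds this number.

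The core is a combinatorial-UCB argument \cite{chen2013combinatorial} at the level of groups, which is where the two difficulties flagged earlier appear: optimality is defined against a collection of arms rather than one arm, and the horizon is unknown. Fix an exploration round $t>t_{init}$ with $Learn(\cdot)=g$; by construction the optimistic group value $\widetilde R_g^t=\sum_{i\in g}\frac{f(\widetilde\mu^g_{i,t})}{\sum_{i'\in g}f(\widetilde\mu^g_{i',t})}\widetilde\mu^g_{i,t}=R_g(\widetilde\mu^g_t)$ satisfies $\widetilde R_g^t\ge\widetilde R_{g^*}^t$. Let $\mathcal E_t$ be the event that $\mu_i\in CR_t^{g'}$ for every arm $i$ and every group $g'$; by \eqref{eq:upperboundkg} and a union bound over arms and groups, the probabilities $\mathbb{P}(\mathcal E_t^c)$ are summable over the rounds that contribute to regret. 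On $\mathcal E_t$, optimism gives $\widetilde R_{g^*}^t\ge R_{g^*}(\mu)=R_{g^*}^*$; and since $\mu$ and $\widetilde\mu^g_t$ both lie in the box $CR_t^g$, we have $|\widetilde\mu^g_{i,t}-\mu_i|\le 2w_{i,t}$ for every $i\in g$, so the Lipschitz continuity of $R_g(\cdot)$ with constant $L_1$ (which exists by Assumption~1) gives $\widetilde R_g^t\le R_g^*+2L_1\max_{i\in g}w_{i,t}$. Chaining these, $\Delta_{min}\le\Delta_g\le 2L_1\,w_t^g/\sqrt{\min_{i\in g}N_{i,t}}$, so $\min_{i\in g}N_{i,t}\le\frac{8L_1^2}{\Delta_{min}^2}\ln(4N_{g,t}k_g/\delta)\le\frac{8L_1^2}{\Delta_{min}^2}\ln(4N_{g,T}k_g/\delta)$; this last step, which only uses $N_{g,t}\le N_{g,T}$, is exactly what lets the proof proceed with the \emph{anytime} width $w_t^g=\sqrt{2\ln(4N_{g,t}k_g/\delta)}$ in place of the horizon-dependent width of \citet{wang2021fairness}.

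The step I expect to be the main obstacle is converting ``the least-pulled arm of $g$ is small'' into ``$N_{g,t}$ is small''. Inside group $g$ an arm is drawn from $\pi_{i,t}=f(\widetilde\mu_{i,t})/\sum_{i'\in g}f(\widetilde\mu_{i',t})$, which by Assumption~2 is bounded below by $f(\gamma_1)/(k_gf(\gamma_2))$ uniformly in $t$. Treating $\mathbbm{1}(I_s=i)-\pi_{i,s}$, over the rounds $s$ in which $g$ is pulled, as a martingale-difference sequence bounded by $1$ and applying the Azuma--Hoeffding inequality, with probability at least $1-\delta/k_g$ we obtain, simultaneously for all $i\in g$, $N_{i,t}\ge\frac{f(\gamma_1)}{k_gf(\gamma_2)}N_{g,t}-\sqrt{\tfrac12 N_{g,t}\ln(k_g/\delta)}$. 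Substituting this lower bound for $\min_{i\in g}N_{i,t}$ into the last bound above and rearranging shows that an exploration round can pick $g$ only while $N_{g,t}\le\frac{k_gf(\gamma_2)}{f(\gamma_1)}\bigl(\frac{8L_1^2}{\Delta_{min}^2}\ln(4N_{g,T}k_g/\delta)+\sqrt{\tfrac12 N_{g,T}\ln(k_g/\delta)}\bigr)$; since $N_{g,t}$ is non-decreasing, the right-hand side bounds the total number of exploration pulls of $g$.

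Finally I would take expectations: off the good events each round contributes an $O(1)$ term, and the failure probabilities --- from both $\mathcal E_t$ and the Azuma--Hoeffding bound --- summed over $t$ contribute a constant of order $\pi^2$; collecting these with the $O(1)$ initialization/flooring terms and using $\lfloor\beta_gT\rfloor\le\beta_gT$ collapses them into the factor $1+\frac{\pi^2}{3}$. Combining with the fairness-round count (fairness pulls $\le\lfloor\beta_gT\rfloor+O(1)$) and summing the per-group bounds weighted by $\Delta_g$ yields the stated inequality. Besides the within-group concentration above --- which must hold uniformly over all arms of a group and is the source of both the $f(\gamma_2)/f(\gamma_1)$ blow-up and the $\sqrt{N_{g,T}\ln(k_g/\delta)}$ term --- the remaining delicate point is the bookkeeping that cleanly separates fairness pulls (already absorbed into $\lfloor\beta_gT\rfloor$) from exploration pulls, so that the two regret contributions of Theorem~\ref{thm:regret-decomposition} stay untangled.
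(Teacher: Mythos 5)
Your proposal is correct and follows essentially the same route as the paper's proof: a group-level UCB argument in which selecting a sub-optimal group forces either a confidence-region failure (summing to the $1+\pi^2/3$ constant) or $\Delta_{min}\le 2L_1\max_{i\in g}w_{i,t}$, combined with the Hoeffding/Azuma lower bound $N_{i,t}\ge \frac{f(\gamma_1)}{k_g f(\gamma_2)}N_{g,t}-\sqrt{N_{g,t}\ln(k_g/\delta)/2}$ to convert the per-arm pull threshold into the stated per-group threshold. Your explicit bookkeeping separating fairness rounds from exploration rounds is a slightly cleaner presentation of what the paper does implicitly by bounding $N_{g,T}$ directly and subtracting $\lfloor\beta_g T\rfloor$, but the substance is identical.
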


We provide the proof in the appendix which essentially follows similar steps to that of UCB by making use of a few additional results such as Lipschitz continuity on the reward function, minimum number of pulling an arm when a group is selected.  Once we have these results, we can prove that 
if a group is pulled sufficient number of times, then each arm in that group is also pulled sufficient number of times due to meritocratic fairness. Since the reward function is Lipschitz continuous, this leads to a distinction of sub-optimal group from the optimal group.

\subsubsection*{Bounding regret due to fairness of exposure within each group}

In order to bound the second term of the regret, the difference in policy is considered for the time periods when a group $g$ is selected. The proof follows similar steps as that in \cite{wang2021fairness} after replacing $T$ with $N_{g,t}$ (number of times a group $g$ is pulled till time $t$ in the confidence region). Thus, the second part of the regret is given by the following lemma.

\begin{restatable}{lemma}{secondtermbound}
\label{theo:GFIE_UCB_RR}
The second part of the regret is
$O\left(\sum_{g\in G}\sqrt{N_{g,T}k_g} \right)$ with probability at 
least $1-\delta$.  
\end{restatable}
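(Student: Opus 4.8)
The plan is to start from the regret decomposition of Theorem~\ref{thm:regret-decomposition}, so that the object to be controlled is the second summand of Equation~(\ref{eq:RegretDecomposition}), namely $\sum_{t=1}^{T}\sum_{g\in G}\mathbbm{1}(g_t=g)\Delta_g^t=\sum_{g\in G}\sum_{t\in T_g}\Delta_g^t$, where $T_g$ is the random set of rounds in which group $g$ is selected. I would then bound each inner sum $\sum_{t\in T_g}\Delta_g^t$ separately, replaying the within-group analysis of \cite{wang2021fairness} with the known horizon $T$ replaced throughout by the realized pull count $N_{g,T}$, exactly as the Exposure subroutine replaces $\sqrt{2\ln(4TK/\delta)}$ by $\sqrt{2\ln(4N_{g,t}k_g/\delta)}$.

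The first step converts the per-round reward gap into a weighted deviation of the estimated means. Fix $g$ and a round $t\in T_g$, and condition on the high-probability event $\{\mu\in CR_t\}$ supplied by Equation~(\ref{eq:upperboundkg}). Because $\tilde\mu_t^g$ maximizes $\nu\mapsto\sum_{i\in g}\frac{f(\nu_i)}{\sum_{j\in g}f(\nu_j)}\nu_i$ over $CR_t$ and $\mu\in CR_t$, we get $\sum_{i\in g}\pi_g^t(i)\tilde\mu_{i,t}\ge R_g^*$, and since both $\tilde\mu_{i,t}$ and $\mu_i$ lie in the width-$2w_{i,t}$ confidence interval on this event,
\[
\Delta_g^t=R_g^*-\sum_{i\in g}\pi_g^t(i)\mu_i\le\sum_{i\in g}\pi_g^t(i)\bigl(\tilde\mu_{i,t}-\mu_i\bigr)\le 2\,\mathbb{E}_{i\in\pi_g^t}\bigl[w_{i,t}\bigr].
\]
(One may instead bound $\Delta_g^t\le\sum_{i\in g}|\pi_g^t(i)-\pi_g^*(i)|$ and feed this into the chain of inequalities already derived in the proof of Theorem~\ref{thm:correctnessMF}; the two routes give the same rate, and the latter is where Assumptions~1 and~2 enter.)

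Summing over $t\in T_g$, substituting $w_{i,t}=\sqrt{2\ln(4N_{g,t}k_g/\delta)}/\sqrt{N_{i,t}}$, and using $N_{g,t}\le N_{g,T}$ to pull the logarithm out, bounds $\sum_{t\in T_g}\Delta_g^t$ by a constant times $\sqrt{2\ln(4N_{g,T}k_g/\delta)}\sum_{t\in T_g}\mathbb{E}_{i\in\pi_g^t}\sqrt{1/N_{i,t}}$. Two standard moves, identical to those in the proof of Theorem~\ref{thm:correctnessMF}, finish the estimate for group $g$: the Azuma--Hoeffding bound of Equation~(\ref{eq:martingale}) replaces $\sum_{t\in T_g}\mathbb{E}_{i\in\pi_g^t}\sqrt{1/N_{i,t}}$ by $\sum_{t\in T_g}\sqrt{1/N_{i_t,t}}$ up to an additive $\sqrt{2N_{g,T}\ln(4/\delta)}$, and since arm $i$ contributes at most $\sum_{n\le N_{i,T}}n^{-1/2}\le 2\sqrt{N_{i,T}}$ to the latter sum, an AM--GM (Cauchy--Schwarz) step gives $\sum_{t\in T_g}\sqrt{1/N_{i_t,t}}\le 2\sqrt{N_{g,T}k_g}$. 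This yields $\sum_{t\in T_g}\Delta_g^t=O\bigl(\sqrt{N_{g,T}k_g\ln(N_{g,T}k_g/\delta)}\bigr)$; summing over the $m$ groups gives the claimed $O\bigl(\sum_{g\in G}\sqrt{N_{g,T}k_g}\bigr)$ up to logarithmic factors. For the probability bookkeeping, the confidence-region failures sum to $\sum_g\sum_t \delta^2/(8N_{g,t}^2k_g)=O(\delta^2)$ and the $m$ Azuma--Hoeffding events can be made simultaneous by rescaling $\delta\mapsto\delta/(2m)$, so a union bound delivers the stated high-probability guarantee.

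I expect the genuine difficulty to be the point already flagged before the lemma: group $g$'s effective horizon $N_{g,T}$ is a random variable rather than a constant, so the Azuma--Hoeffding step cannot be applied naively. The clean way is to regard $\mathbbm{1}(g_t=g)\bigl(\mathbb{E}_{i\in\pi_g^t}\sqrt{1/N_{i,t}}-\sqrt{1/N_{i_t,t}}\bigr)$ as a martingale difference sequence indexed by all $t\le T$ (each term bounded by $1$ and vanishing off $T_g$), whose predictable quadratic variation is at most $N_{g,T}$, and then invoke a variance-adaptive (Freedman-type) tail bound, or an optional-stopping argument, to obtain the $\sqrt{N_{g,T}}$ scaling instead of a weaker $\sqrt{T}$ one; the same care is what licenses treating the data-dependent width $\sqrt{2\ln(4N_{g,t}k_g/\delta)}$ as an anytime-valid confidence radius. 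Everything else is a routine transcription of the single-group argument of \cite{wang2021fairness}.
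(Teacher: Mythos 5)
Your proposal is correct and follows essentially the same route as the paper's proof: optimism of $\tilde\mu_t^g$ over the confidence region gives $\Delta_g^t\le\sum_{i\in g}\pi_g^t(i)(\tilde\mu_{i,t}-\mu_i)\le 2\,\mathbb{E}_{i\sim\pi_g^t}[w_{i,t}]$, after which the logarithm is pulled out via $N_{g,t}\le N_{g,T}$, the Azuma--Hoeffding ``concentration width'' lemma converts the expected widths into realized ones, and the pigeonhole/AM--GM step yields $\sum_{t\in T_g}\sqrt{1/N_{i_t,t}}\le 2\sqrt{N_{g,T}k_g}$. If anything, you are more careful than the paper about the random stopping count $N_{g,T}$ in the martingale step, which the paper applies without further comment.
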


Thus, combining the results above leads to the following bound on the reward regret.
\begin{theorem}
The reward regret (\groupReward) of \ouralgo\ is given as: 
\begin{align*}
&\mathfrak{R}_\pi^\beta (T)= \left(1+\frac{\pi^2}{3}\right)\sum_{g\in G}\Delta_g+ \sum_{g\in G}\sqrt{N_{g,T}k_g}(1-\delta) + \delta T\\
&\!\!\!\!\!+\sum_{g\in G} \left(\frac{k_gf(\gamma_2)}{f(\gamma_1)}\left(\frac{8L_1^2}{(\Delta_{min})^2}\ln \left(\frac{4N_{g,T}k_g}{\delta}\right) + \sqrt{\frac{N_{g,T}\ln(k_g/\delta)}{2}}\right)-\beta_gT\right)\Delta_g
\end{align*}
Substituting $\delta$ to be $\Omega(1/\sqrt{T})$, we get the regret of $O(\sqrt{T n})$.
\end{theorem}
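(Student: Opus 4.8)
This is an assembly result, so the plan is to plug the three earlier facts into one another and then optimise $\delta$. First I would invoke the Regret Decomposition Theorem (Theorem~\ref{thm:regret-decomposition}) to write
\[
\mathfrak{R}_\pi^\beta (T)= \sum_{g\in G} \left(\mathbb{E}_\pi [N_{g,T}] - \lfloor\beta_gT\rfloor\right)\Delta_g + \sum_{t=1}^T \sum_{g\in G} \mathbbm{1}\!\left( g_t = g \right)\Delta_g^t,
\]
splitting the regret into the \emph{group-selection regret} (first sum) and the \emph{within-group exposure regret} (second sum). I would then upper bound the group-selection regret using the first-term lemma (the lemma on sub-optimal group selection), whose right-hand side already carries the $-\beta_gT\,\Delta_g$ offset, so that the surviving pieces are the constant burn-in $(1+\pi^2/3)\sum_g\Delta_g$, a UCB-style exploration overhead of order $k_g\,\tfrac{f(\gamma_2)}{f(\gamma_1)}\cdot\tfrac{L_1^2}{\Delta_{\min}^2}\ln(4N_{g,T}k_g/\delta)$, and a concentration slack of order $k_g\,\tfrac{f(\gamma_2)}{f(\gamma_1)}\sqrt{N_{g,T}\ln(k_g/\delta)}$, each scaled by $\Delta_g$. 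For the within-group exposure regret I would use Lemma~\ref{theo:GFIE_UCB_RR}, which bounds it by $O\!\big(\sum_{g}\sqrt{N_{g,T}k_g}\big)$ on an event of probability at least $1-\delta$.

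\textbf{Reconciling the two bounds.} The two ingredients are of different type --- one is in expectation, the other high-probability --- so I would condition on the good event of Lemma~\ref{theo:GFIE_UCB_RR} and on its complement. Since the per-round reward is bounded, each of the $T$ rounds can contribute at most a constant to the regret, hence the bad event (probability $\le\delta$) adds at most $\delta T$ to the expected regret. Summing the good-event contribution $\sum_g\sqrt{N_{g,T}k_g}(1-\delta)$, the bad-event contribution $\delta T$, and the group-selection bound reproduces exactly the closed-form expression in the statement.

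\textbf{Asymptotics.} To extract $O(\sqrt{Tn})$ I would use the deterministic facts $N_{g,T}\le T$, $\sum_{g\in G}N_{g,T}=T$ and $\sum_{g\in G}k_g=n$. Cauchy--Schwarz on the vectors $\big(\sqrt{N_{g,T}}\big)_g$ and $\big(\sqrt{k_g}\big)_g$ gives $\sum_{g}\sqrt{N_{g,T}k_g}\le\big(\textstyle\sum_g N_{g,T}\big)^{1/2}\big(\textstyle\sum_g k_g\big)^{1/2}=\sqrt{Tn}$, the leading term. Taking $\delta=\Theta(1/\sqrt{T})$ makes $\delta T=\Theta(\sqrt{T})$ and collapses every $\ln(\cdot/\delta)$ into $O(\log T)$, so the $\ln(4N_{g,T}k_g/\delta)$ term is $O(n\log T)$ and the $\sqrt{N_{g,T}\ln(k_g/\delta)}$ terms are $\tilde{O}(\sqrt{T})$, both lower-order in $T$, while the burn-in term is $O(1)$ in $T$. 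Up to the problem-dependent constants $\gamma_1,\gamma_2,L_1,\Delta_{\min},m$ and logarithmic factors, this yields $\mathfrak{R}_\pi^\beta(T)=O(\sqrt{Tn})$.

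\textbf{Main obstacle.} Given the two lemmas the residual work is bookkeeping; the only delicate point at this level is the in-expectation versus high-probability mismatch, which forces the $\delta T$ term and then the choice $\delta=\Theta(1/\sqrt{T})$ to balance it against $\sqrt{Tn}$. (The genuinely hard content --- showing a sub-optimal group is abandoned after only $\tilde{O}(1/\Delta_{\min}^2)$ pulls even though arms within a chosen group are drawn from a randomised meritocratic policy, via the minimum-merit Assumption~2 and Lipschitz continuity of $R_g(\cdot)$ --- is exactly what the first-term lemma supplies, and this theorem takes it as a black box.)
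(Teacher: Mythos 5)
Your proposal is correct and matches the paper's (largely implicit) argument exactly: the paper simply states that the theorem follows by combining the Regret Decomposition Theorem with the two lemmas, and your assembly — including the $(1-\delta)/\delta T$ split for the high-probability second-term bound and the Cauchy--Schwarz step giving $\sum_g\sqrt{N_{g,T}k_g}\le\sqrt{Tn}$ — fills in precisely the bookkeeping the paper leaves to the reader.
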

\section{Experiments}
\begin{figure}
\hspace{-5mm}
\begin{subfigure}{0.24\textwidth}
\includegraphics[scale=0.107]{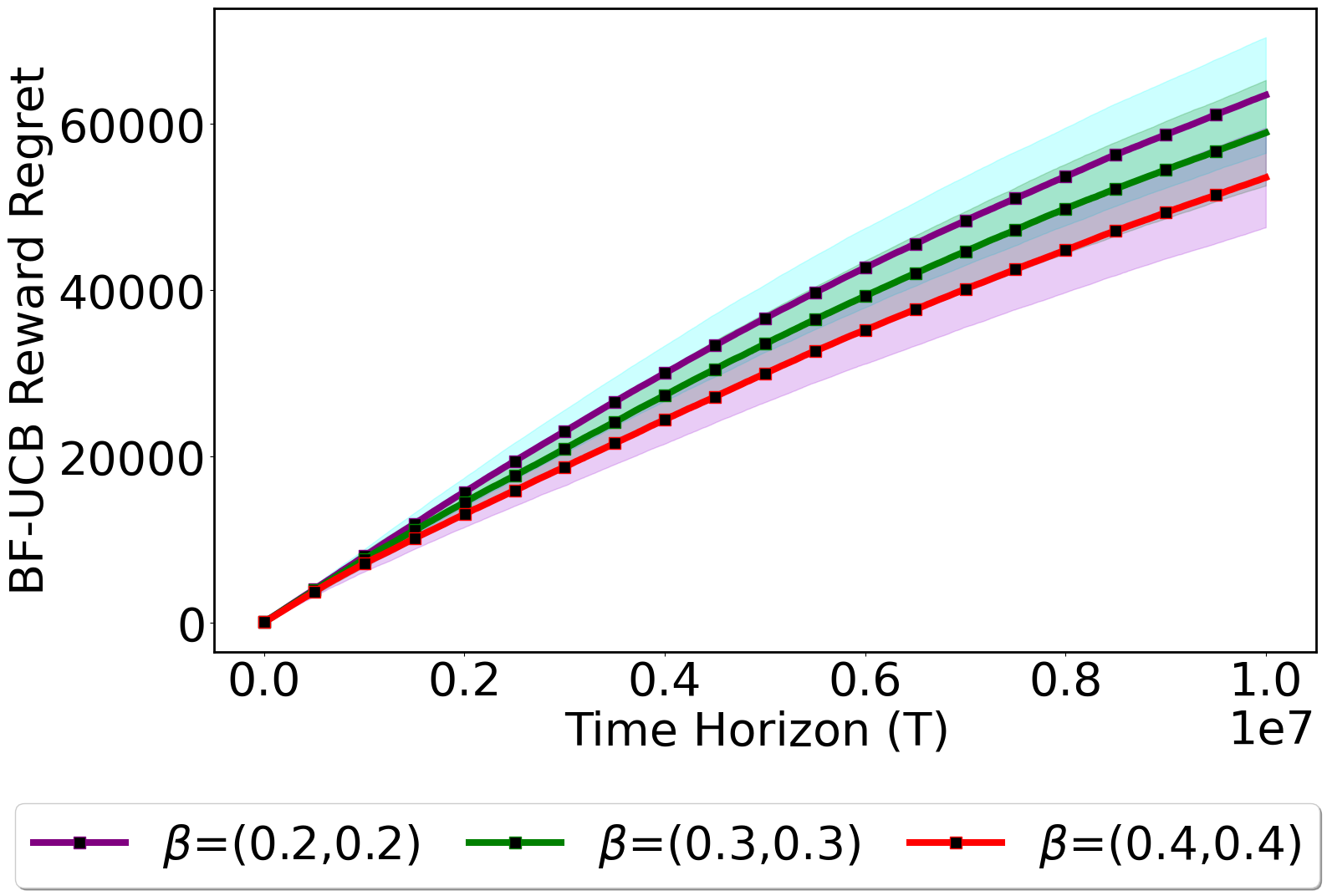}
\caption{For high number of arms}
\label{fig:11}
\end{subfigure}
\begin{subfigure}{0.24\textwidth}
\includegraphics[scale=0.107]{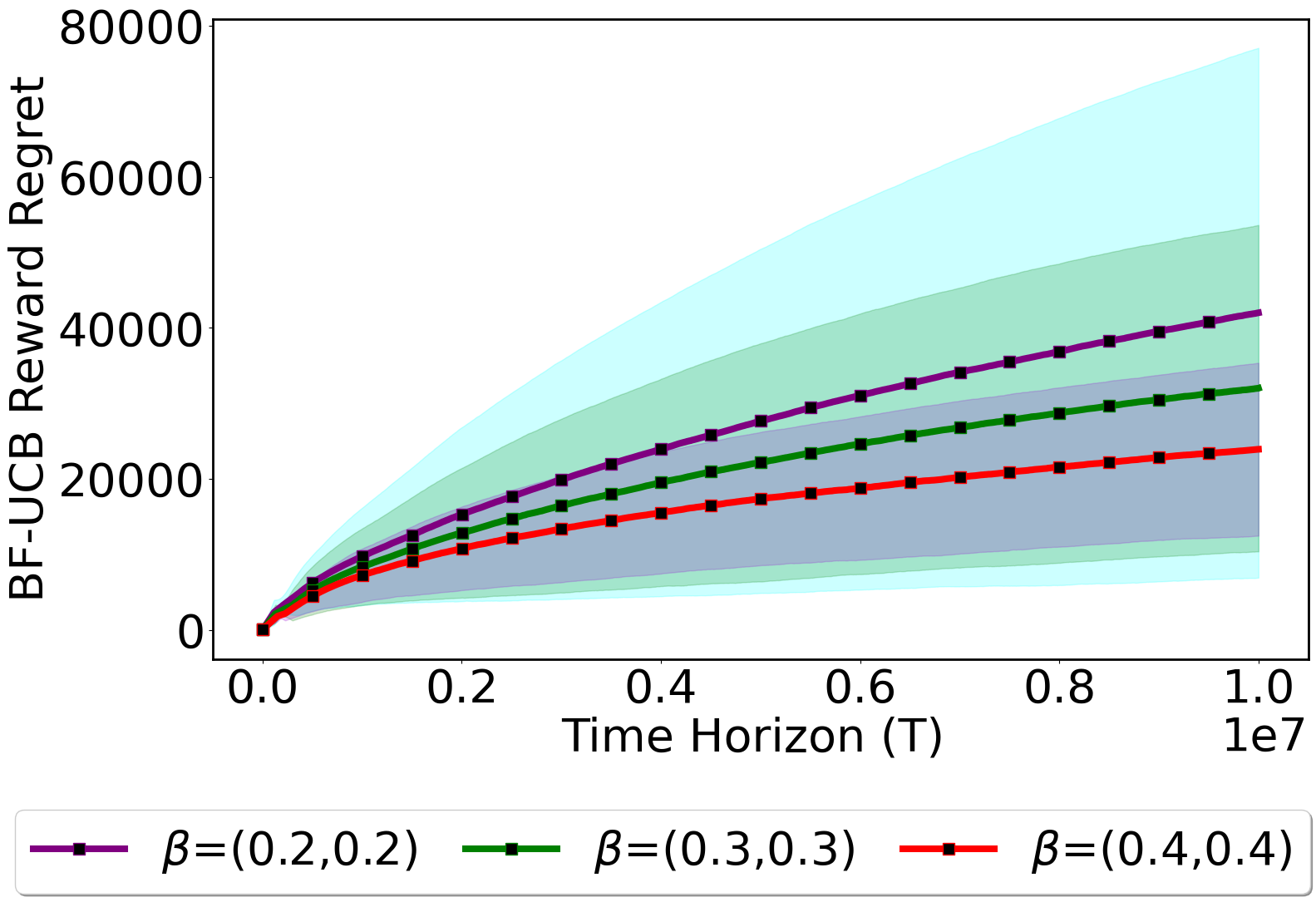}
\caption{For low number of arms}
\label{fig:31}
\end{subfigure}
\hspace{-5mm}
\caption{For the \ouralgo\ algorithm: Comparison of Reward Regret over time  for different values of $\beta$}
\end{figure}

\begin{figure}
\hspace{-5mm}
\begin{subfigure}{0.245\textwidth}
\includegraphics[scale=0.11]{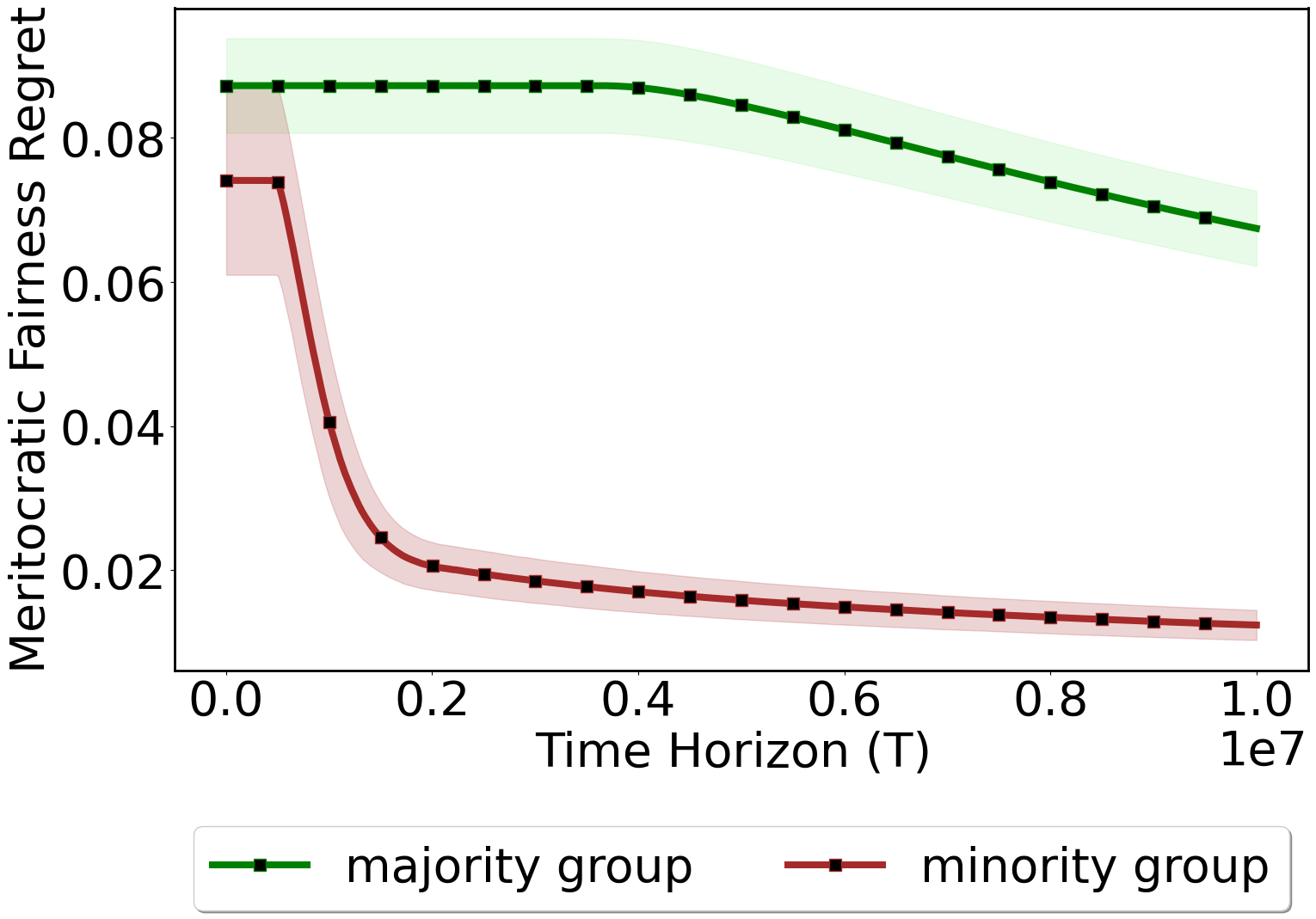}
\caption{For high number of arms}
\label{fig:12}
\end{subfigure}
\begin{subfigure}{0.245\textwidth}
\includegraphics[scale=0.11]{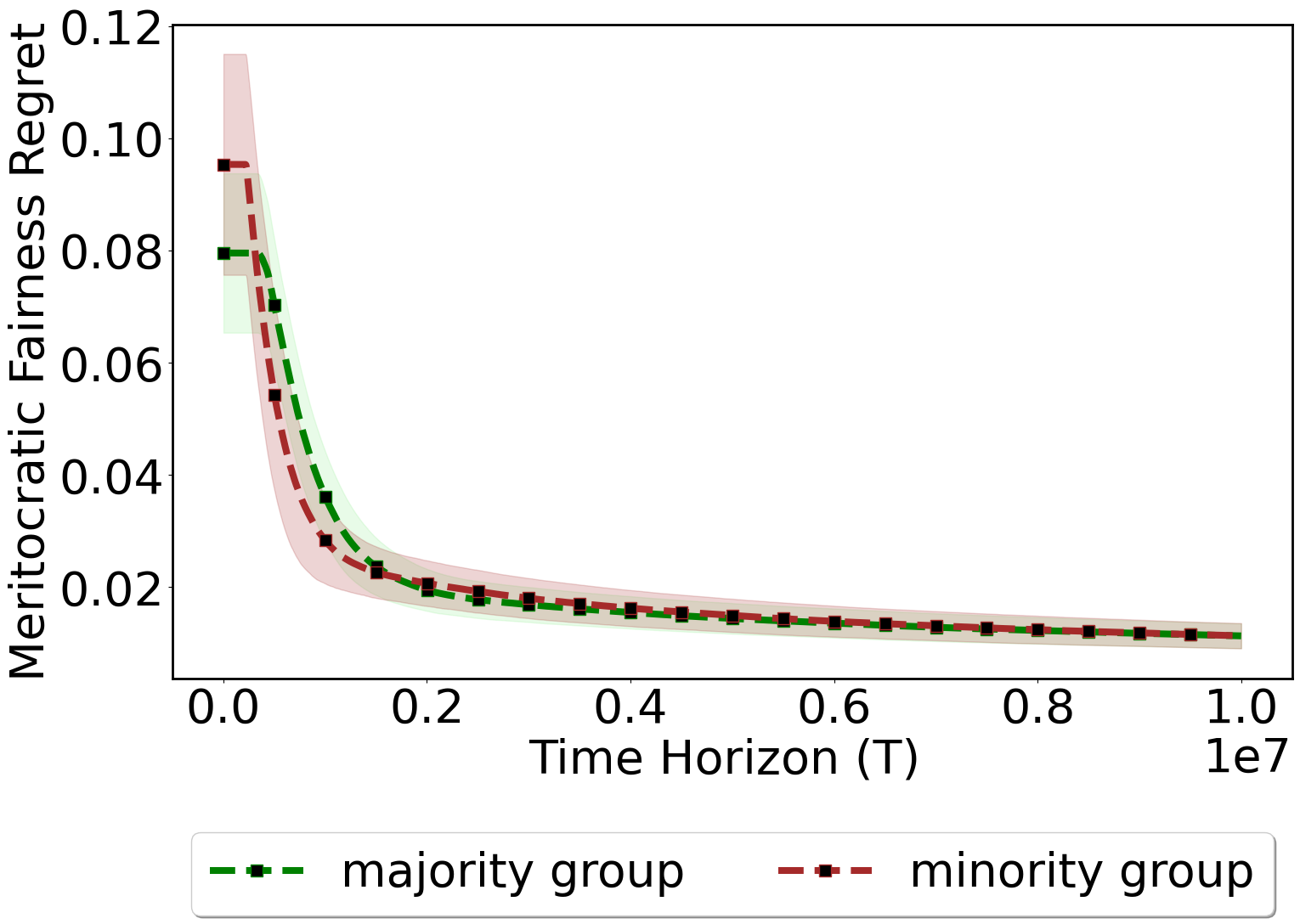}
\caption{For low number of arms}
\label{fig:32}
\end{subfigure}
\hspace{-5mm}
\caption{For the \ouralgo\ algorithm: Comparison of  Meritocratic Fairness Regret over time across the different groups}
\end{figure}
In this section, we analyze our algorithm for regret and fairness via simulated experiments. The goal is to study the effect of the number of arms on regret and fairness guarantees, and also, how (i) \mgfair\ and (ii) \mifair\ guarantees of \ouralgo\ compares with that of UCB~\cite{auer2002using},~\cite{patil2021achieving} and ~\cite{wang2021fairness}. 
We first start by explaining these baselines, followed by our experimental setup and results.
\subsection{Baselines}
\subsubsection{UCB}
This baseline is a conventional UCB algorithm \cite{lai1985asymptotically} that aims to maximize the total reward obtained by pulling any arm without any fairness constraints.
\subsubsection{Meritocratic Fair Algorithm (MF)}

The MF algorithm ensures meritocratic fairness across all arms independent of the groups \cite{wang2021fairness} in which they are present. 
\subsubsection{Group Exposure Fair Algorithm (GEF)}
 This algorithm is an adaption from \cite{patil2021achieving} to group exposure where when a group is chosen, the arm with the highest reward is preferred instead of ensuring meritocratic fairness within the group. 
\begin{figure*}
\begin{subfigure}{0.33\textwidth}
\includegraphics[scale=0.125]{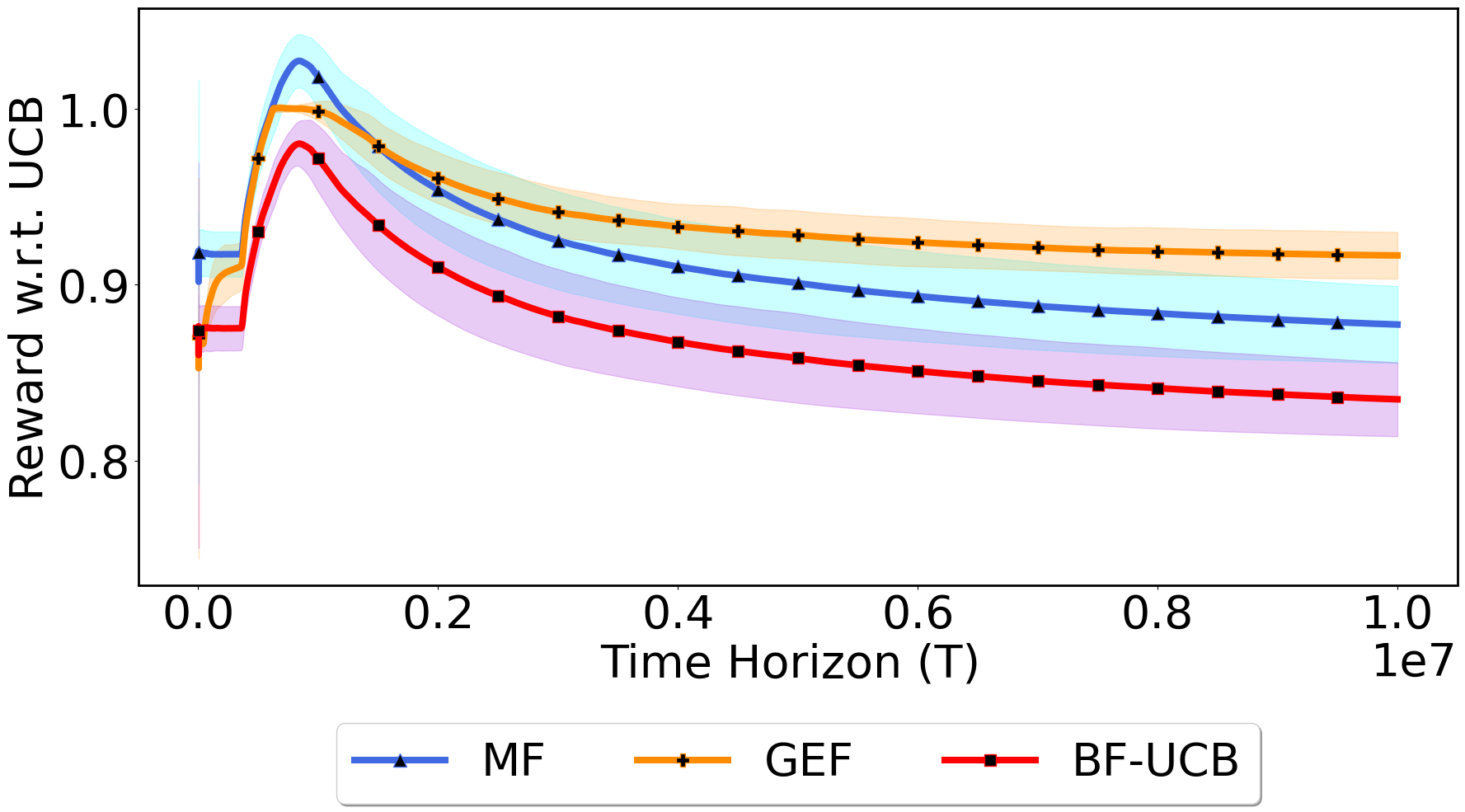}
\caption{Total Reward (w.r.t. UCB)}
\label{fig:21}
\end{subfigure}
\begin{subfigure}{0.30\textwidth}
\includegraphics[scale=0.12]{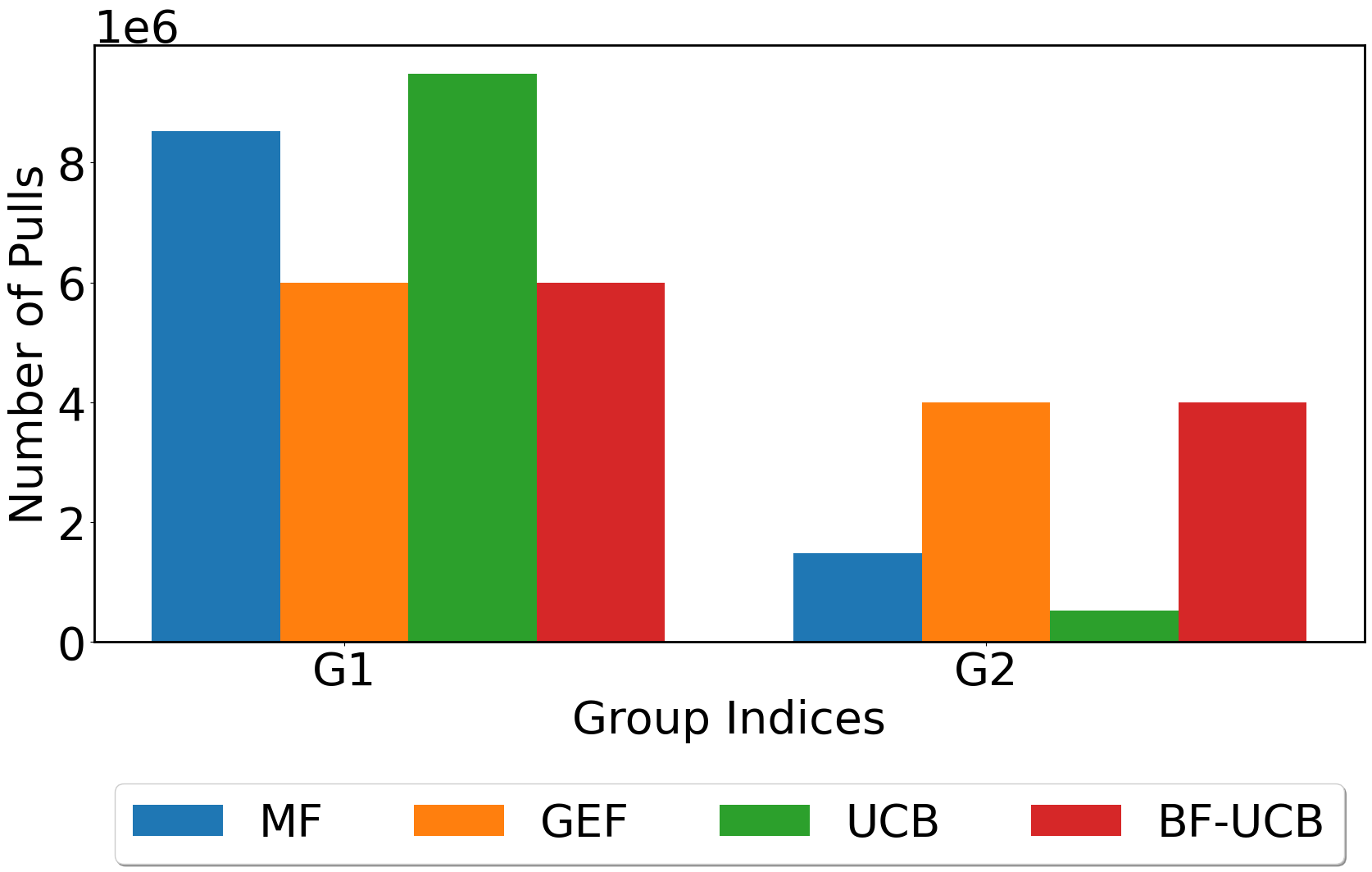}
\caption{Group Exposure}
\label{fig:22}
\end{subfigure}
\begin{subfigure}{0.35\textwidth}
\includegraphics[scale=0.12]{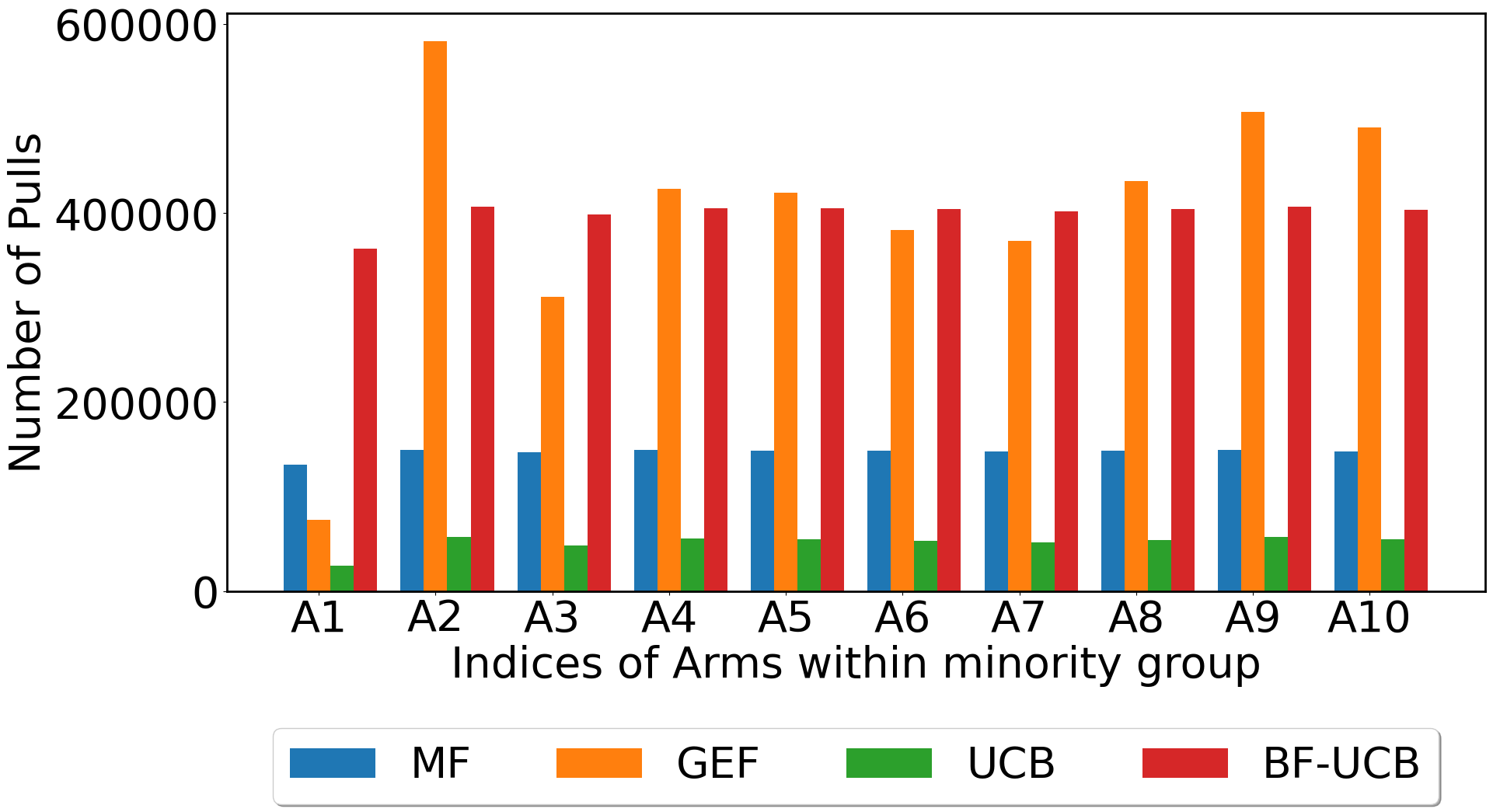}
\caption{\mbox{Exposure across Arms (within minority group)}}
\label{fig:23}
\end{subfigure}
\caption{Comparison of \ouralgo , \mgfair\ and \mifair\ on different
performance measures for the setting involving high number of arms}
\end{figure*}
\subsection{Experimental Setup}
We have considered the number of groups to be two, inline with the group fair literature where mostly majority and minority groups are considered. We have run our experiments for a total time $T = 10^7$,
and ran 50 random runs of each of the experiments to plot the results~\footnote{The code is available at: \url{https://github.com/MultiFair-Bandits/Stochastic_Fair_Bandits/}}.
In order to show the efficacy of our algorithm, we have considered two settings:
\begin{enumerate}[leftmargin=*]
\item {\em Low number of arms\/}: In this setting, we consider the number of arms in minority and majority groups to be five and ten, respectively. The mean rewards of the arms from both groups are generated uniformly from $[0.6, 0.85]$. In this setting, there is very little separability amongst the rewards of the arms, and thus, each run may lead to a different optimal group. 
The arm probabilities are generated afresh in each run.  

\item 
{\em High number of arms\/}: Here, the minority and majority groups contain ten and fifty arms, respectively. The mean rewards of arms from the majority and minority groups are generated uniformly from $[0.7, 1]$ and $[0.5, 0.8]$ respectively. This setting has clear separability amongst the optimal and sub-optimal group where majority group is optimal for all the rounds. 
\end{enumerate}

\subham{These threshold on number of arms is motivated by real-world examples such as the Adult dataset \cite{adult-dataset},  where a typical ratio between two group values (sensitive attribute race) is typically 1:8 and in gender  attributes, the typical ratio is 1:2.} 
We consider merit function $f(\mu) = \mu$ and $\delta=.01$. The merit function is chosen thus as it can be shown that the maximum value of the reward function with the above merit function is always achieved at the highest value of $\mu$ when all $\mu_i$'s are greater than $0.5$. The proof of this result is provided in the Appendix. Therefore, such a merit function allows us to directly use the upper confidence value of $\mu$ without explicitly computing the optimal value. It is also to be noted that the regrets will not be affected much by different merit functions. We now explain the results of \ouralgo\  on different performance measures in comparison with the baselines.
\subsection{Experimental Results}
For all the comparisons, we consider $\beta = (0.4,0.4)$ except for the comparison of regret, where we plot the total regret against all three different $\beta$ values, namely, $(0.2,0.2), (0.3,0.3),$ and $(0.4,0.4)$. 

\subsubsection{Reward Regret} 
Figures \ref{fig:11} and \ref{fig:31} show the reward regret for the two settings, namely, high and low number of arms, respectively, for different values of $\beta$. As only \ouralgo\ maintains \ourfair, the regret of only \ouralgo\ is plotted. It can be seen from both the figures that the regret is sub-linear. A higher value of $\beta$ puts more constraint on the group exposure guarantee, leading to lower regret due to the sub-optimal group pull. For instance, when $\beta = (0.5,0.5)$, both \ouralgo\ and the optimal algorithm will end up pulling both the groups in a round-robin fashion, thus leading to a regret of zero in the first term. The high variance for a lower number of arms setting is due to non-separability in rewards of the arms. This leads to a change in the optimal group over different runs, leading to high variance.

\subsubsection{\ifair\ Regret} 
Figures \ref{fig:12} and \ref{fig:32} show the policy regret for the different groups, i.e., $|\pi^*_g - \pi^t_g|$ in the two settings, respectively. It can be seen from the figures that policy regret eventually converges to zero. It should be noted that though one would expect the policy regret of the majority group, which is optimal in almost all cases, in Figure \ref{fig:12} to converge faster, we do not see such a trend here. This is primarily due to the large number of arms in the majority group, which makes it difficult to converge faster. On the other hand, when we have low number of arms, we see this convergence much faster in Figure \ref{fig:32}. 

\subsubsection{Total Reward} 
Figure \ref{fig:21} compares the total reward of \ouralgo\ with different baselines for the higher number of arms setting. The rewards of different baselines are normalized with respect to the reward of UCB. As can be seen from the figure, the rewards of different algorithms initially increase with respect to UCB and then decrease gradually with time. The initial increase is due to the exploration phase of all the algorithms leading to similar rewards in the initial rounds. After a few rounds, UCB will start picking the arm with maximum reward, whereas other algorithms will have to satisfy the fairness constraint and hence, they will receive a lesser reward as compared to UCB. Since \mgfair\ still picks the best arm in the group whereas \mifair\ has to ensure exposure fairness across all the arms, the reward of \mgfair\ is higher than that of \mifair.
It must be noted that the normalized rewards are not too far from $1$ and the difference in the rewards across various baselines is not much.
As expected, \ouralgo\ receives the least reward amongst all the algorithms as it needs to satisfy the strictest fairness notion.

\subsubsection{Group Exposure} 
Figure \ref{fig:22} compares the number of times each group is pulled across different algorithms for the higher number of arms setting. As can be seen, \ouralgo\ and \mgfair\ give the most balanced exposure to the two groups. UCB gives the least exposure. On the other hand, since \mifair\ provides the exposure guarantees across all arms, it still ends up pulling the majority group a significantly larger number of times as compared to the minority group. This figure shows that just ensuring exposure fairness across individual arms does not guarantee enough exposure to the groups.


\subsubsection{Exposure across Arms} 
Figure \ref{fig:23} plots the exposure of different arms only from the minority group for the higher number of arms setting. It shows that \mifair\ gives the least exposure to these arms, as there is a high number of arms in the majority group, thus leading to low exposure of arms in the minority group. The exposure to the arms is best when employing \ouralgo. \mgfair\ algorithm, though it seems to be giving good exposure, it should be noted that it has high variance because at each run, the optimal arm will be different and \mgfair\ aims to pull the optimal arm. UCB algorithm gives the least exposure to the arms present in the minority group. This figure shows that \ouralgo\ not only ensures group exposure but also ensures individual arm exposure within each group.


\iffalse
The results discussed in Figure 4. are Cumulative Reward over Time in which \ouralgo\ is compared with  (i.e \textbf{$\pi_{i,T}$}) within  group g, from the group policy (i.e $\pi_g$) averaged over 50 trials for $T=10^6$ rounds each. For comparison to showcase the importance of \ouralgo\ algorithm we have compared with the competing baselines i.e exposure Across group  based \gfair\ , Merit based \ifair\ and UCB as baselines.
Then we move towards our Initial Results, that are from Figure 1. where Regret Plot versus Time for different $\beta-$\gfair are mentioned in Fig 1a. with Group wise Policy Regret at each time with its mean and standard deviation are discussed in Fig 1b., Lastly, in Fig 1c. Group exposure is discussed with number of extra pulls after providing group exposure for each group w.rt to time is discussed. 
Next, In Experimental Results  We have discussed in Figure 1, \ouralgo\ cumulative Reward Regret at different $\beta$ are plotted w.r.t time in Figure 1a..
First, we have shown that \ouralgo\  results in sublinear Reward Regret with respect to time for 
multiple runs based varying distribution of reward allocation to the arm at different $\beta$. 
Second, In Fig 1b. and in Fig 1c. \ifair\ and \gfair\ among optimal and suboptimal groups at $\beta=[0.4,0.4]$ are analysed respectively.
Lastly, \ifair, \gfair, and UCB are compared with \ouralgo. In Fig 2a. The comparison are done on Cumulative Reward v/s time, In Fig 2b. comparison for $\beta-$\mgfair\ is analysed at $\beta$=[0.4,.4] by plotting Group Pulls over corresponds to optimal and suboptimal group, and In Fig 2c. \mifair\ is analysed for within suboptimal group \ifair\ to arms.\\
We have done fair comparisons  to describe where our merit lies while implementing Bi-level fairness, We have done comparison of \ouralgo with \ifair, \gfair, and UCB.  Before discussing the performance and analysis we have discussed the competitive baselines in detail:\\

\subsection{Reward Regret sublinear convergence w.r.t Time}
From experimental evaluation shown in Fig1. we observes \ouralgo\  Reward Regret w.r.t time to be sublinear. The result are obtained on different distribution. The Cumulative Reward Regret with mean and standard deviation for $\beta-$\mgfair\ is [.2,.2], [.3,.3] and [.4,.4] (for 2 groups), \ouralgo\ shows sublinear convergence.
\subsection{Meritocratic Fairness within Groups }
\ifair\  is observed from Figure 2a. w.r.t time that guarantees \mifair\ is ensured for each group $g \in G$ at  time t ,. Figure 2a. plots for $\sum_{i\in g}|\pi_i^*-\pi_i^t|$ w.r.t time t $\forall\; g\; \in\; G$ converges  for large enough t where $T=10^7$. For the experimental setup we have used $\beta-$\mgfair\ to be [0.4,0.4] and $\mu^i_{g1} =(0.7,1)$ and $\mu^i_{g2} =(0.5 ,0.8)$, the optimal and suboptimal group have number of arms 50 and 10 respectively.
\ouralgo\ ensures each arm within group based on merit policy $\pi_i^t$  converges to the merit within groups an optimal policy ($\pi_g^*$) for large enough time of horizon $T$, i.e  $\forall i\in g $ $\forall g \in G$.
converges towards group optimal policy $\pi_g^*$.
\\
For each group $g\in G$ learner learns a merit policy i.e $\pi_{g,t}$. The policy learnt by the learner converges to the optimal policy  $\pi^*_{g}$ as discussed in Figure 2. \\
\subsection{Group Exposure Fairness}
 \gfair\ from Figure 2b. for  $\beta-$\mgfair\ =[0.4,0.4] \ouralgo\ ensures
 $N_{g,t}- \lfloor\beta_{g}t\rfloor \geq 0\;\forall\;g\;\in\;G\;$. Figure 3. have demonstrated experimentally that anytime group exposure is provided for suboptimal and optimal group while keeping least pulls for suboptimal group, and as time increases the number of pulls for optimal group is advanced by \ouralgo.

\subsection{Comparison with Baselines }
\textbf{Results From fig 4 5 6} \\ 
The results discussed in Figure 4. are Cumulative Reward over Time in which \ouralgo\ is compared with  (i.e \textbf{$\pi_{i,T}$}) within  group g, from the group policy (i.e $\pi_g$) averaged over 50 trials for $T=10^7$ rounds each.
\\
In Figure 5. and 6. Comparison is shown based on average merit policy by each arm within each group g for \ouralgo\ algorithm .
The comparison of average merit policy for arm $i \in g$ for each group g, the \ouralgo\ algorithm is compared with proportionality based Group Fair Algorithm, Exposure Fair Algorithm and with benchmark optimal policy ($\pi^*_g$) for each group.
$\beta= [{.4,.4}]$ for different distribution trials,to compare \ouralgo\ with group fair and Fair exposure algorithm. \\
Figure 3(a) \ouralgo\ competing Cumulative Reward over time in comparison with UCB, \ifair and \gfair. \\
Figure 3(b) followed by the plot on Group Pulls v/s Time for competing baselines algorithm with $\beta-$\mgfair\  with its mean and variances for different distribution over arms. \\
Lastly, in Figure 3(c) Plot of Within Group Pulls v/s Time for suboptimal group has shown to achieve \mifair with its mean and variances for different distribution over arms.\\
\fi


\section{Conclusion}


  In summary, our novel fair Multi-Armed Bandit (MAB) framework, \ouralgo, ensures both \ifair\ and \gfair. Through rigorous regret decomposition analysis and from \ourfair\ guarantee, we established its theoretical foundation. Our experimental results demonstrated competitiveness in achieving normalized rewards relative to UCB, in comparison to MF and GF. We also showcased its practical utility in achieving fair exposure to the arms within minority groups. In conclusion, our \ourfair\ MAB algorithm, \ouralgo , is the first to give a robust solution for achieving \ourfair\  with sublinear regret.
\section*{Acknowledgement}
The research is supported by the Department of Science \& Technology, India, with grant number CRG/2022/007927.
\bibliographystyle{ACM_format}
\balance
\bibliography{aamas24}

\newpage

\appendix 
\section{Missing Proofs}
\subsection{Bounding First Term of Regret}

Let us define $R_g(\mu) = \sum_{a\in g}\frac{f(\mu_a)}{\sum_{a\in g}f(\mu_a)}\mu_a$ as the expected reward from a group $g$ with reward vector $\mu$. We then have the following Lemma, which directly follows from Assumption 1.

\begin{lemma}
If $f(\cdot)$ satisfies Assumption 1, then there exists a constant $L_1$ such that:
$$|R_g(\mu) - R_g(\mu')| \le L_1|\mu_a - \mu_a'| \forall a \in g$$
\end{lemma}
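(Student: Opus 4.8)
The plan is to view $R_g(\mu)=\sum_{a\in g}\pi_{g,a}(\mu)\,\mu_a$ as the expectation of the reward vector under the merit-proportional distribution $\pi_{g,a}(\mu)=f(\mu_a)/F(\mu)$ with $F(\mu)=\sum_{a'\in g}f(\mu_{a'})$, and to split $R_g(\mu)-R_g(\mu')$ into a term in which only the reward values change and a term in which only the mixing weights change:
\[
R_g(\mu)-R_g(\mu')=\sum_{a\in g}\pi_{g,a}(\mu)\,(\mu_a-\mu_a')+\sum_{a\in g}\bigl(\pi_{g,a}(\mu)-\pi_{g,a}(\mu')\bigr)\,\mu_a'.
\]
Since $\pi_g(\mu)$ is a probability vector, the first sum is at most $\max_{a\in g}|\mu_a-\mu_a'|$ (equivalently at most $\sum_{a\in g}|\mu_a-\mu_a'|$). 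Using that the rewards lie in $[0,1]$, the second sum is at most $\sum_{a\in g}|\pi_{g,a}(\mu)-\pi_{g,a}(\mu')|$, so the whole problem reduces to establishing that the normalization map $\mu\mapsto\pi_g(\mu)$ is Lipschitz.

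For that reduction I would write the difference over a common denominator and add and subtract $f(\mu_a)F(\mu)$ in the numerator, obtaining
\[
\pi_{g,a}(\mu)-\pi_{g,a}(\mu')=\frac{f(\mu_a)\bigl(F(\mu')-F(\mu)\bigr)+F(\mu)\bigl(f(\mu_a)-f(\mu_a')\bigr)}{F(\mu)\,F(\mu')}.
\]
Assumption 2 gives $F(\mu),F(\mu')\ge k_g\gamma_1>0$ and $f(\mu_a)\le\gamma_2$, which is exactly what keeps the denominator bounded away from zero; Assumption 1 gives $|f(\mu_a)-f(\mu_a')|\le L|\mu_a-\mu_a'|$ and hence $|F(\mu)-F(\mu')|\le L\sum_{b\in g}|\mu_b-\mu_b'|$. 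Substituting these estimates and summing over $a\in g$ yields
\[
\sum_{a\in g}|\pi_{g,a}(\mu)-\pi_{g,a}(\mu')|\le\frac{L(\gamma_1+\gamma_2)}{k_g\gamma_1^2}\sum_{b\in g}|\mu_b-\mu_b'|,
\]
so combining the two parts gives $|R_g(\mu)-R_g(\mu')|\le\bigl(1+\tfrac{L(\gamma_1+\gamma_2)}{k_g\gamma_1^2}\bigr)\sum_{a\in g}|\mu_a-\mu_a'|$, and absorbing the arm count $k_g=|g|$ into the constant produces the claimed Lipschitz constant $L_1$ (depending only on $L,\gamma_1,\gamma_2$ and $k_g$).

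I expect the only delicate point to be the Lipschitz estimate for the normalization map: one has to be careful that it is the strictly positive lower bound $\gamma_1$ from Assumption 2 that prevents the denominators $F(\mu),F(\mu')$ from degenerating, and to keep the triangle-inequality bookkeeping tight enough that the final constant depends on $L,\gamma_1,\gamma_2$ alone (up to the arm count). The remaining steps are routine — the ``values change'' term is immediate from $\pi_g$ being a distribution, and boundedness of the rewards handles the $\mu_a'$ factors in the ``weights change'' term. A minor refinement that is not needed here: since $\sum_{a\in g}\bigl(\pi_{g,a}(\mu)-\pi_{g,a}(\mu')\bigr)=0$, one may recenter $\mu_a'$ by a constant before bounding, which saves a factor of two.
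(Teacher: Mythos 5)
Your proof is correct, but it takes a different route from the paper's. The paper argues that $R_g$ is locally Lipschitz \emph{as a function of the merit values} $f(\mu_i)$ by bounding the partial derivatives $\partial R_g/\partial f(\mu_i)$ (using $\gamma_1 \le f \le \gamma_2$ to control the denominator), and then composes this with the Lipschitz continuity of $f$ from Assumption~1. You instead give a direct, derivative-free argument: split $R_g(\mu)-R_g(\mu')$ into a ``values change'' term, which is trivially controlled because $\pi_g(\mu)$ is a probability vector, and a ``weights change'' term, which reduces to Lipschitzness of the normalization map $\mu\mapsto f(\mu_\cdot)/\sum_b f(\mu_b)$; the latter you establish by the standard add-and-subtract trick in the numerator, with Assumption~2 keeping the denominator $F(\mu)F(\mu')\ge k_g^2\gamma_1^2$ away from zero and Assumption~1 controlling both $|f(\mu_a)-f(\mu_a')|$ and $|F(\mu)-F(\mu')|$. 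Both approaches yield a constant $L_1$ depending only on $L,\gamma_1,\gamma_2$ and $k_g$, which is all the lemma asserts. Your version is more elementary and every step is explicit, which is a genuine advantage here: the paper's stated gradient expression is garbled (the correct partial derivative of $R_g$ with respect to $f(\mu_i)$ is $\bigl(\mu_i\sum_j f(\mu_j)-\sum_a f(\mu_a)\mu_a\bigr)/\bigl(\sum_j f(\mu_j)\bigr)^2$, not what is written), whereas your triangle-inequality bookkeeping can be checked line by line. The one implicit ingredient you use beyond the stated assumptions is boundedness of the rewards ($\mu_a'\in[0,1]$) to handle the $\mu_a'$ factors in the weights-change term, but this is standard in the stochastic bandit setting and the paper relies on it as well. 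Note also that your bound is naturally in terms of $\sum_a|\mu_a-\mu_a'|$ rather than a single coordinate difference; since the lemma is later applied with a uniform per-coordinate bound, absorbing the factor $k_g$ into $L_1$ (as you do) matches the intended use.
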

\begin{proof}
 To prove the above claim, we will prove that $R_g$ is locally Lipschitz $\forall \;\mu$s satisfying Assumption 2.  This would imply that $|R_g(f(\mu)) - R_g(f(\mu'))| \le L'|f(\mu) - f(\mu')|$. Once, we are able to prove that the above lemma will directly follow from the Lipschitz continuity of $f$. For claim that, $R_g$ is locally Lipschitz as a function of $f(\mu)$, consider its gradient w.r.t. $f(\mu_i), \nabla_i R_g = \frac{\sum_j f(\mu_j)\cdot \mu_i - \mu_i\cdot f(\mu_i)}{(\sum_j f(\mu_j))^2} \leq \frac{\gamma_2^2k_g\max_i \mu_i}{k_g^2\gamma_1^2 } = \frac{\gamma_2^2\max_i \mu_i}{k_g\gamma_1^2}$. Note that $0<\gamma_1\leq f(\mu_i)\leq \gamma_2 , \forall i$. Since the gradient is bounded, this naturally means that $R_g$ is locally Lipschitz, satisfying the claim.
\end{proof}

We have the following lemma, which follows from Assumption 2 by applying upper and lower bounds.

\begin{lemma}
For any arm $a$ in group $g$, the probability of pulling the arm $a$ is lower bounded by $\frac{\gamma_1}{k_g\gamma_2}$. 
\end{lemma}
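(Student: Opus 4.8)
The plan is to read the bound directly off the within-group sampling distribution used by the \texttt{Exposure} subroutine, so the argument is essentially a one-line application of Assumption~2. Fix a group $g$ with $|g| = k_g$ and a round $t$ at which $g$ is selected. After the \texttt{Initialization Phase} (during which every arm of $g$ is pulled at least once by construction, so the statement is only needed afterwards), Subroutine~\ref{alg:exposure} pulls arm $i \in g$ with probability $\pi_{i,t} = f(\tilde{\mu}_{i,t}) / \sum_{i' \in g} f(\tilde{\mu}_{i',t})$, where $\tilde{\mu}_t^g \in CR_t$ is the maximizer of the group reward over the confidence region.

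First I would observe that every coordinate of $\tilde{\mu}_t^g$ is a feasible expected reward: it lies in the confidence region $CR_t$ around the empirical means, and in particular in $[0,1]$, so Assumption~2 applies to it, giving $\gamma_1 \le f(\tilde{\mu}_{i,t}) \le \gamma_2$ for every $i \in g$. Then the numerator satisfies $f(\tilde{\mu}_{a,t}) \ge \gamma_1$ and the denominator satisfies $\sum_{i' \in g} f(\tilde{\mu}_{i',t}) \le k_g \gamma_2$, and dividing yields $\pi_{a,t} \ge \gamma_1 / (k_g \gamma_2)$, uniformly over $t$ and over the realized confidence region. This is exactly the claimed bound (interpreted as the probability of pulling $a$ conditioned on group $g$ being selected).

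There is no real obstacle here beyond a small bookkeeping point: Assumption~2 must be invoked at the maximizer $\tilde{\mu}_t^g$ rather than at the true mean vector $\mu$, which is legitimate because the assumption is stated for \emph{all} feasible expected rewards and $CR_t$ contains only such vectors. It is worth flagging that this uniform lower bound on $\pi_{a,t}$ is precisely what powers the downstream analysis: combined with a concentration argument, it shows that once a group $g$ has been selected $N_{g,t}$ times, each arm $a \in g$ has been pulled on the order of $N_{g,t}\,\gamma_1/(k_g\gamma_2)$ times, which is what makes the sub-optimal groups distinguishable and drives the first-term regret bound.
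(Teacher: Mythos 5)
Your proof is correct and matches the paper's intended argument: the paper gives no explicit proof, stating only that the lemma ``follows from Assumption 2 by applying upper and lower bounds,'' which is precisely the computation you carry out on the sampling distribution $\pi_{i,t} = f(\tilde{\mu}_{i,t})/\sum_{i'\in g} f(\tilde{\mu}_{i',t})$ from the Exposure subroutine. Your bookkeeping remark about invoking Assumption~2 at $\tilde{\mu}_t^g$ rather than at the true means is a reasonable and correct clarification of a point the paper leaves implicit.
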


We next have the following lemma that provides the lower bound on the number of arms pulled in each group. This proof directly follows from Hoeffding's bound \cite{hoeffding1963prob}.

\begin{lemma}
\label{lemma:bounding-ucb-lcb}
If each group is pulled $N_{g,t}$ number of rounds, then we have $N_{a,t} \ge \frac{N_{g,t}f(\gamma_1)}{k_gf(\gamma_2)} - \sqrt{\frac{N_{g,t}\ln(k_g/\delta)}{2}}$ with probability atleast $1-\delta$ for all arms $a \in g$.
\end{lemma}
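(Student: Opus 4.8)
The plan is to regard each selection of group $g$ as a trial in which arm $a$ is chosen with conditional probability at least the bound supplied by the preceding lemma, and then apply the Azuma--Hoeffding inequality (the martingale version of Hoeffding's bound) to the resulting bounded martingale, followed by a union bound over the $k_g$ arms of the group.

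First I would enumerate the (random) rounds at which group $g$ is selected as $t_1 < t_2 < \dots < t_{N_{g,t}}$, exactly as in the proof of Theorem~\ref{thm:correctnessMF}, and set $X_s = \mathbbm{1}\{I_{t_s} = a\}$ for $s = 1, \dots, N_{g,t}$, so that $N_{a,t} = \sum_{s=1}^{N_{g,t}} X_s$. Writing $\mathcal{F}_{s-1}$ for the history up to (and including) the $(s-1)$-st selection of $g$, the arm at round $t_s$ is drawn from the Exposure policy $\pi_g^{t_s}$, hence $\mathbb{E}[X_s \mid \mathcal{F}_{s-1}] = \pi_g^{t_s}(a) \ge p$, where $p = \tfrac{f(\gamma_1)}{k_g f(\gamma_2)}$ is the per-arm lower bound from the preceding lemma (which itself follows from Assumption~2 applied to the numerator and denominator of $\pi_g^{t_s}(a)$). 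Consequently $M_s := \sum_{r=1}^s \big(X_r - \mathbb{E}[X_r\mid\mathcal{F}_{r-1}]\big)$ is a martingale with increments bounded in $[-1,1]$.

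Next I would apply the Azuma--Hoeffding inequality to $M$: with probability at least $1 - \delta/k_g$,
\[ N_{a,t} = \sum_{s=1}^{N_{g,t}} X_s \;\ge\; \sum_{s=1}^{N_{g,t}} \mathbb{E}[X_s\mid\mathcal{F}_{s-1}] - \sqrt{\tfrac{N_{g,t}\ln(k_g/\delta)}{2}} \;\ge\; N_{g,t}\,p - \sqrt{\tfrac{N_{g,t}\ln(k_g/\delta)}{2}}, \]
which is exactly the claimed inequality for the fixed arm $a$. A union bound over the $k_g$ arms of group $g$ then gives the statement for all $a \in g$ simultaneously with probability at least $1-\delta$.

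The main subtlety is that $N_{g,t}$ is itself random --- how many times $g$ has been chosen depends on the realized rewards and on the fairness-driven selection rule --- so the concentration bound is being invoked at a random time rather than at a fixed horizon. This is handled by observing that $\{N_{g,t} \ge s\}$ is $\mathcal{F}_{s}$-measurable, so the optional-stopping version of Azuma--Hoeffding applies; alternatively one may union-bound over the at most $t$ possible values of $N_{g,t}$, which merely inflates the logarithmic factor. The remaining ingredients --- boundedness of the martingale increments and the uniform lower bound $p$ on the within-group selection probability --- are immediate from the structure of the Exposure subroutine and Assumption~2, so this is the only point that requires care.
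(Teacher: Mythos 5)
Your proposal is correct and follows the same basic strategy as the paper: lower-bound the conditional probability of pulling arm $a$ at each selection of group $g$ by $p=\frac{f(\gamma_1)}{k_g f(\gamma_2)}$ via Assumption~2, concentrate the sum of indicators $\sum_s \mathbbm{1}\{I_{t_s}=a\}$ around (at least) $N_{g,t}\,p$, and union-bound over the $k_g$ arms. The difference is in the concentration step: the paper simply invokes the classical Hoeffding inequality for \emph{independent} variables in $[0,1]$ and asserts $\mathbb{E}[S_n]\ge N_{g,t}p$, whereas you correctly observe that the indicators are not independent (the Exposure policy $\pi_g^{t_s}$ is adaptive, depending on past rewards) and therefore pass to the martingale difference sequence $X_s-\mathbb{E}[X_s\mid\mathcal{F}_{s-1}]$ and apply Azuma--Hoeffding, and you additionally flag that $N_{g,t}$ is a random time. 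On both counts your argument is the more rigorous one; the paper's proof is really a shorthand for yours. Two small points of care: (i) to recover exactly the tail $\exp(-2\epsilon^2/n)$ that matches the stated deviation $\sqrt{N_{g,t}\ln(k_g/\delta)/2}$, you should use the version of Azuma--Hoeffding for increments lying in an interval of \emph{length} one (which holds here, since $X_s-\mathbb{E}[X_s\mid\mathcal{F}_{s-1}]\in[-\pi_g^{t_s}(a),\,1-\pi_g^{t_s}(a)]$), rather than the weaker $[-1,1]$ form, which would cost a factor of $4$ in the exponent; (ii) replacing $\sum_s\mathbb{E}[X_s\mid\mathcal{F}_{s-1}]$ by $N_{g,t}\,p$ is valid precisely because the conditional-expectation lower bound holds pointwise at every step, which your martingale formulation makes explicit. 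Neither point changes the structure of the argument.
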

\begin{proof}
From Hoeffding's inequality, if $X_1, X_2, \ldots, X_n$ are independent random variables with $0 \le X_i \le 1$ then we have $$P(E[S_n] - S_n \ge \epsilon) \le exp\left(\frac{-2\epsilon^2}{n}\right)$$
Here, $S_n$ is the sum of all the $X_i$'s. For each arm $a$, assume $X_i$ denote the random variable if the arm is pulled or not. Then, substituting $\epsilon = \sqrt{\frac{N_{g,t}\ln(k_g/\delta)}{2}}$ in the above Hoeffding's inequality we have:
$$N_{a,T} \ge \frac{N_{g,t}f(\gamma_1)}{k_gf(\gamma_2)} - \sqrt{\frac{N_{g,t}\ln(k_g/\delta)}{2}}$$ with probability atleast $1-\frac{\delta}{k_g}$. Here, the inequality follows from the fact that $\mathbb{E}[S_n] \ge \frac{N_{g,t}f(\gamma_1)}{k_gf(\gamma_2)}$ from assumption 2.
Applying union bound, we get the required result.
\end{proof}

Let us also define $\bar{\mu}_t^g = arg\min_{\mu \in CR_t} \sum_{i\in g}\frac{f(\mu_i)}{\sum_{i'\in g}f(\mu_{i'})}\mu_i$. Then, we have the following result again following through Hoeffding's bound.

\begin{lemma}
\label{lemma:hoeffdings-mu}
At any time $t$, $\mathbb{P}(R_{g^*}(\Tilde{\mu}_t^{g^*}) \le R_{g^*}^{g^*}) \le \frac{\delta^2}{8N_{g,t}^2k_g}$ and $\mathbb{P}(R_{g^*}(\bar{\mu}_t^{g^*}) \ge R_{g^*}^{g^*}) \le \frac{\delta^2}{8N_{g,t}^2k_g}$. 
\end{lemma}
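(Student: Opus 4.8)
The plan is a standard ``optimism and pessimism under a confidence event'' argument: show that each of the two stated events forces the true mean vector, restricted to the coordinates of $g^*$, out of the confidence region $CR_t$, and then bound the probability of that failure by Equation~(\ref{eq:upperboundkg}). Throughout I read the $g$ and $k_g$ appearing in the statement as $g^*$ and $k_{g^*}$, and I write $R_{g^*}^{g^*}$ for the true within-group-optimal reward $R_{g^*}^*=R_{g^*}(\mu)$ of the optimal group, with $R_{g^*}(\cdot)$ as defined above and $\mu$ the true mean vector.

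First I would fix the round $t$ and introduce the good event $\mathcal{E}_t=\{\,\mu_i\in[\hat{\mu}_{i,t}-w_{i,t},\,\hat{\mu}_{i,t}+w_{i,t}]\ \text{for all } i\in g^*\,\}$, i.e.\ the event $\mu\in CR_t$ on the coordinates of $g^*$. Applying Hoeffding's inequality to each arm $i\in g^*$ with width $w_{i,t}=\sqrt{2\ln(4N_{g^*,t}k_{g^*}/\delta)/N_{i,t}}$, and taking a union bound over the $k_{g^*}$ arms of $g^*$, gives $\mathbb{P}(\mathcal{E}_t^c)\le \delta^2/(8N_{g^*,t}^2 k_{g^*})$; this is exactly the content of Equation~(\ref{eq:upperboundkg}).

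Next, on $\mathcal{E}_t$ the true vector $\mu$ is feasible for both optimization problems defining $\tilde{\mu}_t^{g^*}$ and $\bar{\mu}_t^{g^*}$. Since $\tilde{\mu}_t^{g^*}=\argmax_{\mu'\in CR_t}R_{g^*}(\mu')$, we get $R_{g^*}(\tilde{\mu}_t^{g^*})\ge R_{g^*}(\mu)=R_{g^*}^{g^*}$; symmetrically, $\bar{\mu}_t^{g^*}=\argmin_{\mu'\in CR_t}R_{g^*}(\mu')$ yields $R_{g^*}(\bar{\mu}_t^{g^*})\le R_{g^*}^{g^*}$. Hence $\{R_{g^*}(\tilde{\mu}_t^{g^*})< R_{g^*}^{g^*}\}\subseteq \mathcal{E}_t^c$ and $\{R_{g^*}(\bar{\mu}_t^{g^*})> R_{g^*}^{g^*}\}\subseteq \mathcal{E}_t^c$, so both probabilities are at most $\mathbb{P}(\mathcal{E}_t^c)\le \delta^2/(8N_{g^*,t}^2k_{g^*})$. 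The mismatch between the strict inequalities here and the $\le/\ge$ in the statement is immaterial, since on $\mathcal{E}_t$ the reversed non-strict inequality already holds.

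I do not anticipate a genuine obstacle; the only point needing mild care is the Hoeffding step, because $N_{i,t}$ (and $N_{g^*,t}$) are random quantities determined by the algorithm's trajectory. One handles this either by conditioning on the history up to round $t$ -- so that $N_{i,t},N_{g^*,t}$ are fixed and $\hat{\mu}_{i,t}$ is an empirical mean of $N_{i,t}$ i.i.d.\ draws -- and then taking expectations, or by absorbing an extra union bound over the at most $N_{g^*,t}$ possible values of $N_{i,t}$ into the confidence width. The latter is precisely why the width $w_t^{g^*}$ carries the $\ln N_{g^*,t}$ term and why the resulting failure probability has the (loose) $N_{g^*,t}^2$ in the denominator rather than something sharper.
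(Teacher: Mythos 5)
Your argument is correct and is essentially the paper's own proof: both reduce the two events to the failure of the confidence region $\mu\notin CR_t$ (via optimality/pessimality of $\tilde{\mu}_t^{g^*}$ and $\bar{\mu}_t^{g^*}$ over $CR_t$) and then apply Hoeffding plus a union bound over the $k_{g^*}$ arms with width $w_{i,t}$. Your added remarks on the randomness of $N_{i,t}$ and the strict-versus-non-strict inequalities are points the paper glosses over, but they do not change the route.
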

\begin{proof}
\begin{align*}
\mathbb{P}(R_{g^*}(\Tilde{\mu}_t^{g^*}) \le R_{g^*}^{g^*}) &= \mathbb{P}(\mu^g \notin [\hat{\mu}^g - w_t, \hat{\mu}_g + w_t])\\
&\le \sum_{i\in g} \mathbb{P}(\mu_i \notin [\hat{\mu}_{i,t} - w_{i,t}, \hat{\mu}_g + w_{i,t}])\\
&\le \sum_{i\in g} \mathbb{P}(|\hat{\mu}_{i,t} - \mu_i| \ge w_{i,t})\\
&\le \sum_{i\in g} 2exp\{-w_{i,t}^2N_{i,t}\}\\
&\le \frac{\delta^2}{8N_{g,t}^2k_g}
\end{align*}
\end{proof}

Next, we define $\Delta_{min} = \min_g R_{g^*}^* - R_g^*$.  Then, we have the following lemma.

\begin{lemma}
If $N_{i,t} \ge \frac{8L_1^2\ln\left(\frac{4N_{g,T}k_g}{\delta}\right)}{(\Delta_{min})^2}\ \forall i\in g$, then we have $R_g(\Tilde{\mu}_t^g) - R_g(\bar{\mu}_t^g) \le \Delta_{min}$.
\end{lemma}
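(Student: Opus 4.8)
The plan is to observe that both of the extremal reward vectors $\tilde\mu_t^g = \arg\max_{\mu\in CR_t}R_g(\mu)$ and $\bar\mu_t^g = \arg\min_{\mu\in CR_t}R_g(\mu)$ live inside the same confidence box $CR_t = \prod_{i\in g}[\hat\mu_{i,t}-w_{i,t},\,\hat\mu_{i,t}+w_{i,t}]$, so they cannot be far apart coordinate-wise, and then to convert that closeness into closeness of rewards via the local Lipschitz continuity of $R_g$ proved in the earlier lemma. Concretely, first I would note that for every $i\in g$, $|\tilde\mu^g_{i,t}-\bar\mu^g_{i,t}|\le 2w_{i,t}=2w_t^g/\sqrt{N_{i,t}}$, hence $\max_{i\in g}|\tilde\mu^g_{i,t}-\bar\mu^g_{i,t}|\le 2w_t^g/\sqrt{\min_{i\in g}N_{i,t}}$.

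Next I would apply the Lipschitz bound $|R_g(\mu)-R_g(\mu')|\le L_1|\mu_a-\mu_a'|$ (used at the coordinate $a$ of largest deviation, which is how the gradient computation in that lemma makes it a sup-norm bound) to get
\[
R_g(\tilde\mu_t^g)-R_g(\bar\mu_t^g)\;\le\;L_1\max_{i\in g}|\tilde\mu^g_{i,t}-\bar\mu^g_{i,t}|\;\le\;\frac{2L_1 w_t^g}{\sqrt{\min_{i\in g}N_{i,t}}}.
\]
Then I would substitute $w_t^g=\sqrt{2\ln(4N_{g,t}k_g/\delta)}$ and use $N_{g,t}\le N_{g,T}$ to get $(w_t^g)^2\le 2\ln(4N_{g,T}k_g/\delta)$; together with the hypothesis $\min_{i\in g}N_{i,t}\ge \frac{8L_1^2\ln(4N_{g,T}k_g/\delta)}{(\Delta_{min})^2}$ this yields $w_t^g/\sqrt{\min_{i\in g}N_{i,t}}\le \Delta_{min}/(2L_1)$, so $R_g(\tilde\mu_t^g)-R_g(\bar\mu_t^g)\le \Delta_{min}$, as claimed.

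There is no real obstacle here — it is a short chain of inequalities — but two bookkeeping points need care. The first is that the confidence half-width $w_t^g$ appearing in $CR_t$ carries the round-$t$ count $N_{g,t}$, whereas the hypothesis is stated with $N_{g,T}$; this is reconciled simply by monotonicity $N_{g,t}\le N_{g,T}$. The second is to make sure the Lipschitz lemma is invoked in its sup-norm form (constant $L_1$ independent of $k_g$), consistent with the bounded-gradient argument used to prove it, since an $\ell_1$ version would introduce a spurious factor of $k_g$ that the hypothesis does not absorb.
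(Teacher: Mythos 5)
Your proposal is correct and follows essentially the same route as the paper: bound $|\tilde\mu^g_{i,t}-\bar\mu^g_{i,t}|$ by $2w_{i,t}$, pass from $N_{g,t}$ to $N_{g,T}$ inside the logarithm, invoke the hypothesis on $N_{i,t}$ to get the coordinate-wise gap below $\Delta_{min}/L_1$, and finish with the Lipschitz bound on $R_g$. Your version is in fact marginally cleaner, since you bound the gap directly from both points lying in the same box $CR_t$ rather than via a triangle inequality through the true mean $\mu_i$, and you are right that the Lipschitz lemma must be read in its sup-norm form for the constants to work out.
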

\begin{proof}
 We have:
    \begin{align*}
    |\Tilde{\mu}_{i,t}^g - \bar{\mu}_{i,t}^g| &\le |\Tilde{\mu}_{i,t}^g - \bar{\mu}_{i,t}^g + \mu_i - \mu_i|\\
    &\le |\Tilde{\mu}_{i,t}^g - \mu_i| + |\bar{\mu}_{i,t}^g - \mu_i|
    &\le 2w_{i,t} = 2\sqrt{\frac{2\ln\left(\frac{4N_{g,t}k_g}{\delta}\right)}{N_{i,t}}}\\
    &\le 2\sqrt{\frac{2\ln\left(\frac{4N_{g,T}k_g}{\delta}\right)}{N_{i,t}}}\\
    &\le \frac{\Delta_{min}}{L_1}
    \end{align*}
    Thus, $R_g(\Tilde{\mu}_t^g) - R_g(\bar{\mu}_t^g) \le L_1|{\mu}_t^g - \bar{\mu}_t^g| \le  \Delta_{min}$.
\end{proof}

\begin{lemma} 
If $N_{g,t} \ge \frac{k_gf(\gamma_2)}{f(\gamma_1)}\left(\frac{8L_1^2}{(\Delta_{min})^2}\ln \left(\frac{4N_{g,T}k_g}{\delta}\right) + \sqrt{\frac{N_{g,T}\ln(k_g/\delta)}{2}}\right)$, then $N_{i,t} \ge \frac{8L_1^2\ln\left(\frac{4N_{g,T}k_g}{\delta}\right)}{(\Delta_{min})^2}$ with probability $1-\delta$.
\end{lemma}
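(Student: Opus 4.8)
The plan is to read the claim off directly from Lemma~\ref{lemma:bounding-ucb-lcb} combined with the hypothesised lower bound on $N_{g,t}$; there is no real mathematical content beyond an algebraic substitution and a monotonicity observation. Recall that Lemma~\ref{lemma:bounding-ucb-lcb} already guarantees, with probability at least $1-\delta$ and simultaneously for every arm $i\in g$,
\[
N_{i,t}\ \ge\ \frac{N_{g,t}\,f(\gamma_1)}{k_g\,f(\gamma_2)}\ -\ \sqrt{\frac{N_{g,t}\ln(k_g/\delta)}{2}}.
\]
So the whole proof consists of plugging the assumed bound on $N_{g,t}$ into the leading term and checking that the deviation term is absorbed.

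First I would substitute $N_{g,t}\ \ge\ \frac{k_gf(\gamma_2)}{f(\gamma_1)}\bigl(\frac{8L_1^2}{(\Delta_{min})^2}\ln(\tfrac{4N_{g,T}k_g}{\delta})+\sqrt{\tfrac{N_{g,T}\ln(k_g/\delta)}{2}}\bigr)$ into $\frac{N_{g,t}f(\gamma_1)}{k_gf(\gamma_2)}$ only; the prefactor $\frac{f(\gamma_1)}{k_gf(\gamma_2)}$ cancels $\frac{k_gf(\gamma_2)}{f(\gamma_1)}$ exactly, so
\[
N_{i,t}\ \ge\ \frac{8L_1^2}{(\Delta_{min})^2}\ln\!\Bigl(\tfrac{4N_{g,T}k_g}{\delta}\Bigr)\ +\ \sqrt{\tfrac{N_{g,T}\ln(k_g/\delta)}{2}}\ -\ \sqrt{\tfrac{N_{g,t}\ln(k_g/\delta)}{2}}.
\]
Then I would use that $t\le T$ forces $N_{g,t}\le N_{g,T}$, so the last two terms are jointly non-negative and may be discarded, leaving $N_{i,t}\ge \frac{8L_1^2}{(\Delta_{min})^2}\ln(\tfrac{4N_{g,T}k_g}{\delta})$, which is exactly the asserted bound. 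The confidence level $1-\delta$ is inherited verbatim from Lemma~\ref{lemma:bounding-ucb-lcb}, so nothing further is needed.

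There is essentially no obstacle here; the only point requiring a moment of care is that the $\ln$ and square-root terms in the hypothesis are evaluated at the \emph{stopping-time} count $N_{g,T}$, whereas the deviation produced by Lemma~\ref{lemma:bounding-ucb-lcb} is in terms of $N_{g,t}$. It is precisely the decision to state the hypothesis with $\sqrt{N_{g,T}\ln(k_g/\delta)/2}$ (rather than $\sqrt{N_{g,t}\ln(k_g/\delta)/2}$) that makes the cancellation valid uniformly over all $t\le T$. Were one to insist on $N_{g,t}$ throughout, one would instead need a time-uniform Hoeffding bound (via a union bound over $t\in[T]$ or a peeling argument), incurring an extra $\ln T$ factor; the present formulation deliberately avoids that.
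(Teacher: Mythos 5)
Your proposal is correct and follows essentially the same route as the paper: both proofs invoke Lemma~\ref{lemma:bounding-ucb-lcb}, substitute the hypothesised lower bound on $N_{g,t}$ into the leading term (where the prefactor $\tfrac{f(\gamma_1)}{k_g f(\gamma_2)}$ cancels), and use $N_{g,t}\le N_{g,T}$ to absorb the deviation term; you merely perform these last two steps in the opposite order. Your closing remark about why the hypothesis is stated with $N_{g,T}$ rather than $N_{g,t}$ is a correct and worthwhile observation, though not needed for the argument itself.
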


\begin{proof}
From Lemma 7, we have, 
\begin{align*}
N_{i,t} &\ge\frac{N_{g,t}f(\gamma_1)}{k_gf(\gamma_2)} - \sqrt{\frac{N_{g,t}\ln(k_g/\delta)}{2}}\\
&\ge \frac{N_{g,t}f(\gamma_1)}{k_gf(\gamma_2)} - \sqrt{\frac{N_{g,T}\ln(k_g/\delta)}{2}}\\
&\ge \frac{8L_1^2\ln\left(\frac{4N_{g,T}k_g}{\delta}\right)}{(\Delta_{min})^2}
\end{align*}
\end{proof}

\firsttermbound*


\begin{proof}
Denote the group selected at time $t$ by the algorithm as $g_t$. We then want to bound the rounds in which suboptimal group $g \ne g^*$ is pulled. Then for the algorithm, we have:
$$N_{g,T} = t_{init} + \sum_{t=t_{init}+1}^T \mathbbm{1}\{g_t = g\}$$
More generally, we can write it as:
$$N_{g,T} = l + \sum_{t=l}^T \mathbbm{1}\{g_t = g, N_{g,t-1} \ge l\}$$
If $g_t = g$ then $R_{g^*}(\Tilde{\mu}_t^{g^*}) \le R_{g}(\Tilde{\mu}_t^g)$. Thus,
\begin{align*}
N_{g,T} &\le l + \sum_{t=l}^T \mathbbm{1}\{R_{g^*}(\Tilde{\mu}_t^{g^*}) \le R_{g}(\Tilde{\mu}_t^g), N_{i,t-1} \ge l\}\\
&\le l + \sum_{t=l}^T \mathbbm{1}\left\{\min_{0< s_{g^*} < t} R_{g^*}(\Tilde{\mu}_{s_{g^*}}^{g^*}) \le \max_{l \le s_i < t} R_{g}(\Tilde{\mu}_{s_i}^g), N_{i,t-1} \ge l\right\}
\end{align*}
 Then, it is easy to see that if at time $t$, $R_{g^*}(\Tilde{\mu}_t^{g^*}) \le R_{g}(\Tilde{\mu}_t^g)$, then one of the following has to be true:
\begin{itemize}
\item $R_{g^*}(\Tilde{\mu}_t^{g^*}) \le R_{g^*}^*$
\item $R_g(\bar{\mu}_t^{g}) \ge R_g^*$
\item $R_{g^*}^* \le R_g^* + c_t$. Here, $c_t$ is a time-dependent constant which satisfies $R_g(\bar{\mu}_t^g) + c_t \ge R_g(\Tilde{\mu}_t^g)$ 
\end{itemize}

The following is easy to see from Hoeffding's inequality:
\begin{itemize}
\item $R_{g^*}(\Tilde{\mu}_t^{g^*}) > R_{g^*}^*$ and $R_g(\bar{\mu}_t^{g}) < R_g^*$ with high probability. This is bounded by Lemma \ref{lemma:bounding-ucb-lcb}.
\item If group $g$ is pulled atleast\\ $l = \frac{k_gf(\gamma_2)}{f(\gamma_1)}\left(\frac{8}{(F^{-1}(\Delta_{min}))^2}\ln \left(\frac{4N_{g,T}k_g}{\delta}\right) + \sqrt{\frac{N_{g,T}\ln(k_g/\delta)}{2}}\right)$ times, then $c_t \le \Delta_{min}\ \forall t\ge l$. Therefore, this will happen with probability $0$.
\end{itemize}
\begin{align*}
N_{g,T} &\le  \ l + \sum_{t=l}^T\sum_{s_{g^*}=1}^t \sum_{s_g = l}^t \mathbbm{1}\left\{R_{g^*}(\Tilde{\mu}_t^{g^*}) > R_{g^*}^*, R_g(\bar{\mu}_t^{g}) < R_g^*\right\}\\
&\le \left(\frac{k_gf(\gamma_2)}{f(\gamma_1)}\left(\frac{8}{(F^{-1}(\Delta_{min}))^2}\ln \left(\frac{4N_{g,T}k_g}{\delta}\right) + \sqrt{\frac{N_{g,T}\ln(k_g/\delta)}{2}}\right)\right)\\ 
&+ 1 + \frac{\pi^2}{3}
\end{align*}
\end{proof}

\subsection{Bounding Second Term of Regret}
In order to bound the second term of the regret i.e. $\sum_t \sum_{g\in G} \mathbbm{1}(g_t = g)\Delta_g^t$, we bound the exposure regret due to a group $g$ when it was selected. In order to do that, let us consider a set $T_g = \{t_1, t_2, \ldots, t_{N_{g,T}}\}$ which denotes the time steps when the group $g$ is pulled. We begin with the following lemma which is a direct consequence of Lemma~\ref{lemma:hoeffdings-mu}.
\begin{lemma}
\label{lemma:GFIE_group_fairness_mean_bound} We have, $\forall$ $t>t_{k_g},i\in[g]$, $\mathbb{P}(\Tilde{\mu}^g_t\in CR_t) \ge 1-\frac{\delta^2}{8N_{g,t}^2k_g}$. 
\end{lemma}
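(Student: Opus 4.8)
\emph{Proof plan.} The plan is to read the claim off directly from the definition of the confidence region $CR_t$ in Subroutine~\ref{alg:exposure} together with Hoeffding's inequality, mirroring the computation already carried out inside the proof of Lemma~\ref{lemma:hoeffdings-mu}. By construction, $CR_t=\{\mu:\ \mu_i\in[\hat{\mu}_{i,t}-w_{i,t},\ \hat{\mu}_{i,t}+w_{i,t}]\ \forall i\in g\}$, so the event that the true mean vector of $g$ lies in $CR_t$ is exactly the intersection over $i\in g$ of the concentration events $\{|\hat{\mu}_{i,t}-\mu_i|\le w_{i,t}\}$; on that event the maximizer $\tilde{\mu}^g_t$ (an $\argmax$ over the nonempty set $CR_t$) is well-defined and, trivially, lies in $CR_t$.

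First I would note that for $t>t_{k_g}$ the \texttt{Initialization Phase} of Algorithm~\ref{alg:group-exposurefair} has already pulled every arm of $g$ at least once, so $N_{i,t}\ge 1$ and each $\hat{\mu}_{i,t}$ is the average of $N_{i,t}$ i.i.d.\ samples in $[0,1]$; in particular $w_{i,t}$ and $CR_t$ are well-defined. Applying Hoeffding's inequality to arm $i$ (in the same form used in the proof of Lemma~\ref{lemma:hoeffdings-mu}) gives $\mathbb{P}(|\hat{\mu}_{i,t}-\mu_i|>w_{i,t})\le 2\exp(-N_{i,t}w_{i,t}^2)$. Substituting $w_{i,t}=w_t^g/\sqrt{N_{i,t}}$ with $w_t^g=\sqrt{2\ln(4N_{g,t}k_g/\delta)}$ makes the exponent equal to $-2\ln(4N_{g,t}k_g/\delta)$, so the per-arm failure probability is at most $2\left(\tfrac{\delta}{4N_{g,t}k_g}\right)^2=\tfrac{\delta^2}{8N_{g,t}^2k_g^2}$.

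Then a union bound over the $k_g$ arms of $g$ yields $\mathbb{P}(\exists i\in g:\ |\hat{\mu}_{i,t}-\mu_i|>w_{i,t})\le \tfrac{\delta^2}{8N_{g,t}^2k_g}$, i.e.\ $\mathbb{P}(\mu^g\in CR_t)\ge 1-\tfrac{\delta^2}{8N_{g,t}^2k_g}$, and hence $\mathbb{P}(\tilde{\mu}^g_t\in CR_t)\ge 1-\tfrac{\delta^2}{8N_{g,t}^2k_g}$ as claimed. There is no genuine obstacle here: the lemma is an immediate specialization of Lemma~\ref{lemma:hoeffdings-mu} (equivalently of Equation~(\ref{eq:upperboundkg})) from the optimal group $g^*$ to an arbitrary group $g$. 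The only points needing care are that the hypothesis $t>t_{k_g}$ is precisely what guarantees $N_{i,t}\ge 1$ for all $i\in g$ so that the empirical means and $CR_t$ make sense, and that the bound holds for each fixed pair $(t,g)$ — it is the subsequent summation of the tails $\sum_{t}\delta^2/N_{g,t}^2$ that feeds the fairness and regret analyses, not a uniform-in-$t$ statement.
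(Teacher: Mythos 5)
Your proof is correct and is essentially the same argument the paper intends: the paper states this lemma as a "direct consequence" of Lemma~\ref{lemma:hoeffdings-mu}, whose proof is exactly the per-arm Hoeffding bound $2\exp(-N_{i,t}w_{i,t}^2)=\delta^2/(8N_{g,t}^2k_g^2)$ followed by a union bound over the $k_g$ arms that you spell out. Your observation that the event should be read as "the true mean vector lies in $CR_t$" (since $\tilde{\mu}^g_t\in CR_t$ holds by construction) matches how the paper actually uses the lemma.
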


\begin{lemma}
\label{lemma:GFIE_group_fairness_concentration_width}
For any $\delta\in(0,1)$, with probability at least $1-\delta/2$,
$$
\left\lvert
\sum_{t\in T_g}\mathbb{E}_{i \in \pi_t^g}\sqrt{1/N_{i,t}} - \sum_{t \in T_g}\sqrt{1/N_{i_t,t}}
\right\rvert
\leq \sqrt{2N_{g,t}\ln(4/\delta)}. 
$$
\end{lemma}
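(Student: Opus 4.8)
The plan is to treat this as the standalone Azuma--Hoeffding step that already appears inside the proof of Theorem~\ref{thm:correctnessMF} as Equation~(\ref{eq:martingale}). Fix the group $g$, and let $\{\mathcal{F}_t\}_{t\ge 0}$ be the natural filtration generated by all arm pulls and observed rewards up to and including round $t$, so that the group $g_t$ selected in round $t$ and the within-group policy $\pi_g^t$ computed by the Exposure subroutine are $\mathcal{F}_{t-1}$-measurable, while the pulled arm $i_t$ is sampled from $\pi_g^t$ conditionally on $\mathcal{F}_{t-1}$. I would then define the increments
\[
Y_t \;=\; \mathbbm{1}(g_t=g)\left(\sqrt{1/N_{i_t,t}} \;-\; \mathbb{E}_{i\in\pi_g^t}\sqrt{1/N_{i,t}}\right), \qquad t > t_{k_g},
\]
and verify the two hypotheses needed for Azuma--Hoeffding.

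First, $\mathbb{E}[Y_t\mid\mathcal{F}_{t-1}]=0$: on the event $\{g_t=g\}$, which is $\mathcal{F}_{t-1}$-measurable, the conditional expectation of $\sqrt{1/N_{i_t,t}}$ under the draw $i_t\sim\pi_g^t$ equals $\sum_{i\in g}\pi_g^t(i)\sqrt{1/N_{i,t}}=\mathbb{E}_{i\in\pi_g^t}\sqrt{1/N_{i,t}}$, so the two terms cancel in expectation. Second, $|Y_t|\le 1$: after the initialization phase every arm of $g$ has been pulled at least once, so $N_{i,t}\ge 1$ and $\sqrt{1/N_{i,t}}\in(0,1]$ for every $i\in g$; hence both $\sqrt{1/N_{i_t,t}}$ and $\mathbb{E}_{i\in\pi_g^t}\sqrt{1/N_{i,t}}$ lie in $(0,1]$, and their difference has absolute value at most $1$. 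Now $\sum_{t=1}^T Y_t = \sum_{t\in T_g}\bigl(\sqrt{1/N_{i_t,t}}-\mathbb{E}_{i\in\pi_g^t}\sqrt{1/N_{i,t}}\bigr)$ since all other increments vanish, and the number of nonzero terms is $|T_g| = N_{g,T}$. Applying the two-sided Azuma--Hoeffding inequality with unit increment bound over these $N_{g,T}$ steps and setting the tail probability to $\delta/2$ — i.e. solving $2\exp\bigl(-\varepsilon^2/(2N_{g,T})\bigr)=\delta/2$ to get $\varepsilon=\sqrt{2N_{g,T}\ln(4/\delta)}$ — yields the claimed bound (with $N_{g,t}\le N_{g,T}$ accounting for the form stated downstream).

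The main obstacle is that $N_{g,T}$, equivalently $|T_g|$, is itself a random variable, since how often group $g$ is selected depends on the learning trajectory and on the group-fairness constraints supplied to the algorithm; thus the number of martingale steps is not fixed in advance. I would handle this by passing to the subsequence of rounds at which $g$ is pulled — a sequence of stopping times with respect to $\{\mathcal{F}_t\}$ — which gives a bounded martingale difference sequence in the reindexed filtration, and then either (i) take a union bound over the at most $T$ possible values of $|T_g|$, or (ii) simply regard $N_{g,T}$ as the effective horizon exactly as \cite{wang2021fairness} does in the analysis of Fair UCB; either route changes only logarithmic factors, which are absorbed into the $\ln(4/\delta)$ term up to constants. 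The remaining ingredients — the martingale-difference identity, the unit bound on the increments, and the arithmetic in the Azuma tail — are routine.
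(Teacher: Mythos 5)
Your proof is correct and follows essentially the same route as the paper's: identify $\sqrt{1/N_{i_t,t}}-\mathbb{E}_{i\in\pi_g^t}\sqrt{1/N_{i,t}}$ as a bounded martingale difference sequence over the rounds in $T_g$ and apply the two-sided Azuma--Hoeffding inequality with unit increments to get the $\sqrt{2N_{g,T}\ln(4/\delta)}$ deviation at level $\delta/2$. Your additional care about the randomness of $|T_g|=N_{g,T}$ (via stopping times or a union bound over its possible values) addresses a point the paper silently glosses over, but it does not change the argument in substance.
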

\begin{proof}
The sequence 
$$\sqrt{1/N_{i_t,t}}-
\mathbb{E}_{i \in \pi_t^g}\sqrt{1/N_{i,t}}$$
is a martingale difference sequence $\forall t>t_{k_g}$
$$\left\lvert\sqrt{1/N_{i_t,t}}-
\mathbb{E}_{i \in \pi_t^g}\sqrt{1/N_{i,t}}
\right\rvert\leq 1$$
We can apply the Azuma-Hoeffding's inequality to get with probability at least $1-\delta/2$,$$
\left\lvert
\sum_{t\in T_g}\mathbb{E}_{i \in \pi_t^g}\sqrt{1/N_{i,t}} - \sum_{t\in T_g}\sqrt{1/N_{i_t,t}}
\right\rvert
\leq \sqrt{2N_{g,t}\ln(4/\delta)}. 
$$
Hence, the lemma is proved to be true.
\end{proof}

\secondtermbound*
\begin{proof}
The second part of the regret for a group $g$ is given as:

\begin{align*}
&=\left( \sum_{t\in T_g}\sum_i(\pi_g^{*}(i) -\pi_t^g(i))\mu_i \right) \\
&\leq \left( 2k_g + \sum_{t\in T_g, t\ge t_{k_g}}\sum_{i\in g} \pi_t^g(i) \Tilde\mu_{i,t}-\pi_t^g(i)\mu_i \right) \tag{with probability at least $1-\frac{\delta^2}{8N_{g,t}^2k_g}$}\\
&= \left( 2k_g + \sum_{t\in T_g, t\ge t_{k_g}}\sum_{i\in g} \pi_t^g(i)(\Tilde\mu_{i,t}-\hat\mu_{i,t}+\hat\mu_{i,t}-\mu_i) \right)\\
\end{align*}
\; Since\; $N_{g,t} \leq N_{g,T}$ \; for\; $T \geq t$,\; we have:\\
\begin{align*}
&\leq \left( 2k_g + \sum_{t\in T_g, t\ge t_{k_g}}\sum_{i\in g} \pi_t^g(i)2\sqrt{ \frac{2\ln(4k_gN_{g,t}/\delta)}{N_{i,t}}} \right)\\
&\leq \left( 2k_g + 2\sqrt{2\ln(4k_gN_{g,T}/\delta)}\sum_{t\in T_g, t\ge t_{k_g}}\mathbb{E}_{i \in \pi_t}\sqrt{ \frac{1}{N_{i,t}}} \right)\\
&\leq \left( 2k_g + 2\sqrt{2\ln(4k_gN_{g,T}/\delta)}\left(\sqrt{2N_{g,T}\ln(4/\delta)}+ \sum_{t\in T_g, t\ge t_{k_g}} \sqrt{ \frac{1}{N_{i_t,t}}}\right) \right)\\
&\leq \left( 2k_g + 2\sqrt{2\ln(4k_gN{g,T}/\delta)}\left(\sqrt{2N_{g,T}\ln(4/\delta)} +  2\sqrt{N_{g,T}k_g}\right) \right)\\
\end{align*}
The first inequality comes from Algorithm {\UCBGFc}, the second inequality comes from "confidence region" Lemma ~\ref{lemma:GFIE_group_fairness_mean_bound}, the third inequality comes from  "concentration width" Lemma~\ref{lemma:GFIE_group_fairness_concentration_width}, and the last inequality applies the AM-GM inequality. Thus, when $T>k_g$, we have that with probability at least $1-\delta$, 
\[
\mathfrak{R}_{\pi}^{\beta}(T) = \widetilde{O}\left(  \sqrt{N_{g,T}k_g}\right).
\]
This concludes the proof. 
\end{proof}

\section{Choice of Merit Function in Experiments}
The below lemma depicts that when $f(\mu_i) = \mu_i$, then $R_g(\mu)$ is maximized with maximum value of $\mu_i$ for all $i\in g$.
\begin{lemma}
$\max_{\mu_i\in[0.5,b_i]}\sum_{i\in g}\frac{\mu_i^2}{\sum_{j\in g}\mu_i} = \sum_{i\in g}\frac{b_i^2}{\sum_{j\in g}b_j}$.
\end{lemma}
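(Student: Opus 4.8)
The plan is to observe that with $f(\mu_i)=\mu_i$ the group reward is $R_g(\mu)=\bigl(\sum_{i\in g}\mu_i^2\bigr)/\bigl(\sum_{j\in g}\mu_j\bigr)$, and to show that this function is non-decreasing in each coordinate $\mu_k$ over the box $\prod_{i\in g}[0.5,b_i]$ (recall that, since rewards lie in $[0,1]$, each $b_i\le 1$, so the relevant box is $\prod_{i\in g}[0.5,b_i]\subseteq[0.5,1]^{|g|}$). Coordinate-wise monotonicity then forces the maximum to the upper corner $\mu_i=b_i$ for all $i\in g$, where $R_g$ evaluates to exactly $\sum_{i\in g} b_i^2/\sum_{j\in g}b_j$.

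Concretely, I would fix a coordinate $k$, write $S'=\sum_{j\in g,\,j\ne k}\mu_j$ and $Q'=\sum_{j\in g,\,j\ne k}\mu_j^2$, and view $R_g=(\mu_k^2+Q')/(\mu_k+S')$ as a function of $\mu_k$ alone (the denominator is positive because every $\mu_j\ge 0.5$). Differentiating, $\partial R_g/\partial\mu_k$ has the same sign as $\mu_k^2+2\mu_k S'-Q'$; equivalently, one can avoid calculus and cross-multiply the inequality $R_g(\mu_k')\ge R_g(\mu_k)$ for $\mu_k'\ge\mu_k$, which reduces after cancellation to $\mu_k\mu_k'+S'(\mu_k+\mu_k')\ge Q'$. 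Either way, the two facts that close the argument are: (i) each $\mu_j\le 1$, hence $\mu_j^2\le\mu_j$ and therefore $Q'\le S'$; and (ii) each $\mu_k\ge 0.5$, hence $2\mu_k\ge 1$ and therefore $2\mu_k S'\ge S'$. Combining, $\mu_k^2+2\mu_k S'-Q'\ge 2\mu_k S'-Q'\ge S'-Q'\ge 0$, so $R_g$ is non-decreasing in $\mu_k$ on $[0.5,b_k]$.

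Given coordinate-wise monotonicity, I would finish with a routine corner argument: $R_g$ is continuous on the compact box $\prod_{i\in g}[0.5,b_i]$, so its maximum is attained, and starting from any point one may raise $\mu_1$ to $b_1$, then $\mu_2$ to $b_2$, and so on, never decreasing $R_g$; hence the maximum is attained at $\mu=(b_i)_{i\in g}$, and direct substitution gives $R_g(b)=\sum_{i\in g} b_i^2/\sum_{j\in g}b_j$. There is no genuinely hard step; the only point requiring care is that the argument uses the hypothesis on \emph{both} sides — $\mu_j\ge 0.5$ for (ii) and $\mu_j\le 1$ for (i) — and indeed dropping either bound makes $R_g$ non-monotone in general, which is precisely why the experimental setup restricts the means to $[0.5,1]$.
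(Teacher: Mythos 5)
Your proof is correct and follows essentially the same route as the paper's: both establish coordinate-wise monotonicity of $R_g(\mu)=\sum_{i}\mu_i^2/\sum_j\mu_j$ on the box, using exactly the two facts that $\mu_j\le 1$ (so $\sum_j\mu_j^2\le\sum_j\mu_j$) and $\mu_k\ge 0.5$ (so $2\mu_k\ge 1$), and then push to the upper corner. The only cosmetic difference is that you differentiate while the paper compares $R_g(\mu_k+a,\mu_{-k})$ to $R_g(\mu_k,\mu_{-k})$ by a finite-increment cross-multiplication; your write-up is, if anything, the more carefully stated of the two.
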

\begin{proof}
Let $\mu_{-i}$ represent the value of $\mu$'s for all arms in group $g$, except $i$. We will prove that $R_g(b_i, \mu_{-i}) \ge R_g(\mu_i, \mu_{-i})\ \forall \mu_i \in [a_i,b_i], \forall \mu_{-i}$. This will immediately imply the lemma. We have:
\begin{align*}
R_g(\mu_i+a, \mu_{-i}) &= \frac{(\mu_i + a)^2 + \sum_{j\ne i}\mu_j^2}{\sum_{j\in g} \mu_j + a}\\
&= \frac{x+y}{z+a}
\end{align*}
Here, 
$x = \sum_{j\in g}\mu_j^2$, $y = a(2\mu_i + a)$, and $z = \sum_{j\in g} \mu_j$. Then, we have the following claim:
\begin{claim}
If $a + 2\mu_1 > 1$, then $\frac{x+y}{z+a} > \frac{x}{z}$.
\end{claim}
The above claim is true because $\frac{x + a(2\mu_i + a)}{z+a} > \frac{x+a}{z+a} > \frac{x}{z}$. The above claim is true because $z > x$. Thus, we have $R_g(\mu_i +a, \mu_{-i}) \ge R_g(\mu_i, \mu_{-i})\ \forall a$.

\end{proof}
\newpage

\end{document}